\documentclass{article}

\usepackage[margin=1in]{geometry}
\usepackage{times}
\usepackage{latexsym}
\usepackage{graphicx}
\usepackage{amsmath,amssymb,amsfonts}
\usepackage{booktabs}
\usepackage{hyperref}
\usepackage{xcolor}
\usepackage[numbers]{natbib}

\usepackage[utf8]{inputenc}
\usepackage[T1]{fontenc}
\usepackage{url}
\usepackage{booktabs}
\usepackage{amsfonts}
\usepackage{amsmath,amsthm,amssymb}
\usepackage{nicefrac}
\usepackage{microtype}
\usepackage[dvipsnames]{xcolor}
\usepackage{graphicx}
\usepackage{placeins}
\usepackage{float}
\usepackage{algorithm}
\usepackage{algorithmic}
\usepackage{caption}
\usepackage{enumitem}
\usepackage{wrapfig}
\usepackage{hyperref}

\newtheorem{theorem}{Theorem}
\numberwithin{theorem}{section}
\newtheorem{proposition}[theorem]{Proposition}

\newtheorem{corollary}[theorem]{Corollary}
\newtheorem{assumption}[theorem]{Assumption}
\newtheorem{conjecture}[theorem]{Conjecture}
\newtheorem{definition}[theorem]{Definition}
\newtheorem{remark}[theorem]{Remark}

\title{Flag Varieties:\\
A Geometric Framework for Deep Network Alignment}

\author{
Jingchuan Xiao$^{1}$ \quad Xinyi Sui$^{2}$ \quad Cihan Ruan$^{2}$ \\
$^{1}$Department of Mathematics and Computer Studies, Mary Immaculate College, Ireland \\ $^{2}$Department of Computer Science and Engineering, Santa Clara University, USA \\
}

\date{}

\begin{document}
\vspace{-4mm}
\maketitle
\begin{abstract}
Alignment—the tendency of adjacent weight matrices in deep networks 
to develop compatible subspace orientations—underlies gradient flow, 
Neural Collapse, and representation similarity across architectures. 
Despite extensive empirical documentation, these phenomena have 
resisted unified theoretical treatment: existing explanations are 
post-hoc, each fitted to a specific observation with whatever 
mathematics is at hand. We reverse this direction by deriving the 
mathematical structure that layerwise alignment inherently demands. 
Using geometric invariant theory, we prove that alignment geometry 
has a canonical closed, polystable stratum given by a flag variety, 
and that subspace intersection dimension is its unique 
reparameterization-invariant observable, establishing that subspace 
metrics are not empirical conventions but mathematical necessities. 
This unified framework yields two dynamical consequences: ridge 
regularization drives subspace alignment at an exponential rate set 
by weight decay, whereas nonlinear activations induce a commutator 
obstruction to exact basis alignment, generically present in nonlinear 
networks and absent in linear ones. Together these give a geometric 
explanation of the Level-2/3 hierarchy in Neural Collapse from first 
principles rather than post-hoc analysis. The commutator magnitude 
and head subspace overlap further serve as weight-space windows into 
internal alignment structure, requiring no forward passes. Experiments 
on multilayer perceptrons, residual networks, and pretrained language 
models support the proposed diagnostics and delineate their scope.
\end{abstract}

\vspace{-4mm}
\section{Introduction}

Deep networks exhibit a striking family of alignment phenomena: 
adjacent weight matrices develop compatible subspace 
orientations, terminal-phase features collapse to geometric 
structures, and attention heads in large language models become 
increasingly redundant~\cite{ji2019gradient, saxe2014exact, 
papyan2020prevalence, michel2019heads, voita2019analyzing}. 
These phenomena are usually studied separately—each addressed 
with whatever mathematical tools are convenient—resulting in 
accounts that are post-hoc, fragmented, and unable to generate 
predictions beyond the phenomena they were built to explain. The deeper issue is not that mathematics cannot illuminate deep learning, but that the approach is inverted: existing work fits mathematical tools to observed phenomena, rather than asking what mathematical structure the phenomena themselves demand.

We develop such a language using geometric invariant theory (GIT)~\cite{mumford1994git, borel1991linear}, which provides the natural framework for asking which geometric structures survive arbitrary changes of parameterization.
At a single layer interface, the natural configuration space is
$\mathrm{Gr}(r_1,d)\times\mathrm{Gr}(r_2,d)$ under the action of
$\mathrm{GL}(d)$, corresponding to changes of coordinates in the
interface representation. We prove that, under the natural Plücker linearization, the flag locus is the canonical closed, polystable stratum in the relative-position stratification of this configuration space. Equivalently, it is the maximal-incidence stratum: all lower-incidence strata degenerate toward it under Hilbert--Mumford one-parameter limits~\cite{mumford1994git}. As a consequence, the only full reparameterization-invariant alignment observable is intersection dimension. This has an immediate diagnostic consequence: alignment 
observables that depend on coordinate choices are not intrinsic—only subspace incidence is.

This geometric viewpoint has dynamical consequences and empirical consequences. We formalize a hierarchy of alignment conditions ranging from 
unconstrained configurations to exact basis identification, 
and show that this hierarchy has geometric content: ridge 
regularization drives the classifier row space into the feature 
subspace at an exponential rate controlled by weight decay, giving 
a mechanism for Level-2 subspace alignment. We also identify an 
activation-induced commutator obstruction to exact Level-3 basis 
alignment: nonlinear activations generically introduce anisotropy in the
activation metric, while the obstruction vanishes in the linear/conformal
case. Thus the Level-2/3 hierarchy is not merely an empirical artifact~\cite{papyan2020prevalence, rangamani2023feature},
but follows from the distinction between invariant subspace geometry and
non-invariant basis choices—telling the practitioner not only what
weight decay is geometrically accomplishing during training, but which
alignment targets are reachable and which are not. The same incidence viewpoint gives an exploratory diagnostic for Transformer attention heads: pretrained GPT-2 
small~\cite{radford2019gpt2} and Llama-3.2-1B~\cite{meta2024llama3} exhibit markedly higher head subspace overlap than Haar-random baselines, providing a 
lightweight, training-free probe of head redundancy~\cite{michel2019heads,voita2019analyzing} requiring 
only query projection weights.  

Our contributions are as follows:
\textbf{(1) Moduli-theoretic foundation for subspace metrics} 
(Theorem~\ref{thm:local_flag_moduli}, Corollary~\ref{thm:no_go_alignment}): 
flag configurations form the closed maximal-incidence stratum; 
intersection dimension is the unique $\mathrm{GL}(d)$-invariant 
alignment datum, establishing that subspace metrics such as 
CKA~\cite{kornblith2019similarity} are mathematical necessities, 
not empirical conventions.
\textbf{(2) Neural Collapse explained} 
(Theorems~\ref{thm:l2_attraction}--\ref{thm:obstruction_main}): 
Level-2 alignment is the stable subspace target driven by ridge 
regularization at a rate precisely controlled by weight decay; 
Level-3 is generically obstructed by activation-induced anisotropy.
\textbf{(3) Exploratory extension to Transformers} 
(Proposition~\ref{prop:wd_redundancy}, Section~\ref{sec:transformer_flag}): 
weight decay favors identical head subspaces; pretrained GPT-2 small 
and Llama-3.2-1B show higher subspace overlap than random baselines, 
yielding a principled diagnostic for head 
redundancy~\cite{michel2019heads,voita2019analyzing}.

\vspace{-4mm}
\section{Related Works}
\vspace{-2mm}
Prior work either documents these 
phenomena empirically without theoretical grounding, or offers 
partial theoretical explanations tied to specific settings or 
mathematical tools that do not generalize across phenomena. No 
existing work addresses the invariance-theoretic question that 
underlies all.\vspace{-3mm}
\paragraph{Alignment dynamics and Neural Collapse.}
Understanding how layer representations align during training is
central to explaining gradient flow efficiency, training stability,
and the emergence of structured geometry in learned
representations. Gradient descent aligns consecutive layers in deep linear
networks~\cite{ji2019gradient,saxe2014exact}; related work analyzes
training stability through dynamical isometry and spectral properties
of Jacobians~\cite{pennington2017resurrecting,pennington2018emergence,
tarnowski2019dynamical}. Neural Collapse describes the
terminal-phase geometry in which class means form a simplex equiangular tight frame (ETF) and
classifier weights align with class
means~\cite{papyan2020prevalence,rangamani2023feature}, with subsequent work analyzing
this geometry under various
assumptions~\cite{mixon2022neural,yaras2022neural,
zhu2021geometric}. These works characterize how
alignment emerges in specific settings, but do not address which
aspects of alignment are invariant under reparameterization, nor why
subspace-level alignment is generically more stable than basis-level
alignment.\vspace{-3mm}
\paragraph{Representation similarity and subspace geometry.}
Centered Kernel Alignment
(CKA)~\cite{kornblith2019similarity}, Canonical Correlation Analysis (CCA)
and Singular Vector Canonical Correlation Analysis
(SVCCA)~\cite{raghu2017svcca,morcos2018insights} compare
representations at the subspace level and are empirically more
stable than neuron-wise comparisons, but their preference for
subspace metrics lacks formal justification under reparameterization.
Grassmannian geometry and matrix manifold optimization provide tools
for subspace learning and
comparison~\cite{ham2008grassmann,absil2008optimization}. For
parameter-efficient adaptation, low-rank methods such as
Low-Rank Adaptation (LoRA)~\cite{hu2022lora} and related
approaches~\cite{denil2013predicting} exploit independent per-layer
low-rank factors to reduce the cost of fine-tuning large models.
These works treat subspaces as modeling primitives but do not address
incidence relations between subspaces across layer interfaces, nor
which subspace comparisons are intrinsically meaningful under
coordinate change.
\vspace{-3mm}
\paragraph{Attention heads.}
In Transformer architectures, a practical question is whether all
attention heads contribute meaningfully to model performance. Prior
work identifies redundant or prunable heads through task performance
or importance measures~\cite{michel2019heads,voita2019analyzing}.
Concurrently, Yamagiwa et al.~\cite{yamagiwa2026projection} propose
a principal-angle-based metric for pairwise head-to-head subspace
affinity in mechanistic interpretability. These approaches
characterize redundancy empirically or through functional
relationships between individual head pairs, but do not connect head
redundancy to a broader invariance-theoretic framework for
layer-level alignment.\vspace{-2mm}

\paragraph{Geometric invariant theory.}
GIT~\cite{mumford1994git,borel1991linear} studies the geometry of
quotient spaces under group actions, providing tools to identify
stable configurations and canonical representatives in moduli
problems. It is well developed in algebraic geometry and has been
applied to problems in computer vision and shape
analysis~\cite{ham2008grassmann}, but its application to
deep learning has remained limited.  To our knowledge, flag varieties
have not previously been used as an organizing framework for
alignment in deep networks, and GIT has not been brought to bear on
the question of which alignment configurations are stable under
layerwise reparameterization or which alignment observables are
intrinsically meaningful under coordinate change.

The absence of a common invariance-theoretic foundation across these
lines of work motivates the geometric framework we develop in the
following sections.

\vspace{-3mm}
\section{Flag Varieties: The Geometry of Alignment} \vspace{-2mm}
We introduce the geometric objects underlying our framework: 
alignment levels at layer interfaces, flag varieties as their 
natural parameter space, and the dimensional advantage of 
flag-aligned configurations.\vspace{-2mm}
\subsection{Layer Interfaces and Alignment Geometry}\vspace{-2mm}
We study geometric relations between consecutive layers in deep networks.
At layer $i$, a weight matrix $W_i\in\mathbb{R}^{m_i\times n_i}$ of rank
$r\le\min(m_i,n_i)$ admits a singular value decomposition (SVD) in
Eq.~\eqref{eq:svd}~\cite{kato1995perturbation}.
\begin{equation}
W_i = U_i \Sigma_i V_i^\top ,
\label{eq:svd}
\end{equation}
where $\Sigma_i\in\mathbb{R}^{r\times r}$ is diagonal and
$U_i\in\mathbb{R}^{m_i\times r}$, $V_i\in\mathbb{R}^{n_i\times r}$
have orthonormal columns.

At an interface with compatible dimensions $n_i = m_{i+1}$, the left
singular subspace $\mathrm{col}(U_i)$ of $W_i$ and the right singular subspace $\mathrm{col}(V_{i+1})$ of $W_{i+1}$
both live in the same ambient space, so alignment is a relation between
two points of a Grassmannian.

Rather than individual bases, we focus on the $r$-dimensional subspaces
$\mathrm{col}(U_i)\subset\mathbb{R}^{m_i}$ and
$\mathrm{col}(V_{i+1})\subset\mathbb{R}^{m_i}$
spanned by their columns.
The space of all $r$-dimensional subspaces of $\mathbb{R}^n$
is the Grassmannian $\mathrm{Gr}(r,n)$~\cite{absil2008optimization, ham2008grassmann}.

Alignment phenomena are therefore described by relations between such
subspaces. Inclusion relations correspond to points on the flag variety
$\mathrm{Fl}(r_1,r_2;d)$ (Definition~\ref{def:flag}).

\vspace{-2mm}
\subsection{Flag Varieties and Alignment Levels}
\vspace{-2mm}
Not all alignment phenomena are equally strong. We formalize 
a hierarchy of increasingly strict alignment conditions, 
ranging from unconstrained configurations to exact basis 
identification.
\begin{definition}[Alignment Levels]\label{def:alignment_levels}
For adjacent layers with SVDs $W_i=U_i\Sigma_iV_i^\top$ and
$W_{i+1}=U_{i+1}\Sigma_{i+1}V_{i+1}^\top$, we define a hierarchy of
increasingly stronger alignment conditions:
\begin{itemize}
\item \textbf{Level 0 (General):} no constraint between $\mathrm{col}(U_i)$ and $\mathrm{col}(V_{i+1})$.
\item \textbf{Level 1 (Flag):} $\mathrm{col}(U_i)\subseteq \mathrm{col}(V_{i+1})$ or $\mathrm{col}(V_{i+1})\subseteq \mathrm{col}(U_i)$.
\item \textbf{Level 2 (Subspace):} $\mathrm{col}(U_i)=\mathrm{col}(V_{i+1})$.
\item \textbf{Level 3 (Strict):} $U_i=V_{i+1}$.
\end{itemize}
\end{definition}\vspace{-2mm}
The distinction between these two levels is crucial in nonlinear
networks: as shown in Section~\ref{sec:neural_collapse},
strict alignment is generically obstructed by nonlinear effects,
while subspace alignment remains achievable and stable.

\vspace{-2mm}
\subsection{Flag Varieties}
\vspace{-2mm}
\begin{definition}[Flag Variety]
\label{def:flag}
A \emph{flag variety} $\mathrm{Fl}(d_1,\ldots,d_k;n)$ parametrizes all
nested sequences $\{0\}\subset V_1\subset\cdots\subset V_k\subset\mathbb{R}^n$
with $\dim(V_i)=d_i$.
\end{definition}

Flag varieties naturally encode alignment constraints~\cite{mumford1994git}.
In feedforward networks, alignment arises \emph{locally} at each
layer interface: subspace alignment
$\mathrm{col}(U_i)=\mathrm{col}(V_{i+1})$ corresponds to a two-step flag
in the interface space.
In architectures with sequential composition or additive residual
updates, when the relevant interfaces are expressed in a shared ambient
representation space, flag-compatible alignments concatenate into the
global incidence chain
\[
\mathrm{col}(V_1)\subseteq \mathrm{col}(U_1)
= \mathrm{col}(V_2)\subseteq \mathrm{col}(U_2)
=\cdots,
\]
which is precisely the structure parametrized by a flag variety.

\vspace{-3mm}
\subsection{Geometric Optimality of Flag-Aligned Configurations}
\vspace{-2mm}
We now quantify the geometric advantage of flag-aligned configurations
in terms of redundant degrees of freedom.
Throughout, we consider networks with equal rank $r$ at each layer.

\begin{proposition}[Dimensional Reduction by Flag Alignment]
\label{prop:flag_dimension}
Let $0<r_1\le \cdots \le r_L\le d$. In a common ambient space
$\mathbb{R}^d$, the flag-aligned configuration space\vspace{-1mm}
\[
\mathrm{Fl}(r_1,\ldots,r_L;d)
=
\{V_1\subseteq\cdots\subseteq V_L\subseteq\mathbb{R}^d:
\dim V_i=r_i\}
\]
is a closed subvariety~\cite{mumford1994git} of the independent configuration space
$\prod_{i=1}^L \mathrm{Gr}(r_i,d)$, of strictly smaller dimension
whenever the nesting constraints are non-vacuous.
Thus flag-aligned configurations form a proper lower-dimensional closed
locus inside the space of independently chosen subspaces. \textit{Proof in Appendix~\ref{app:proof_flag_dimension}.}
\end{proposition}

Proposition~\ref{prop:flag_dimension} shows that flag alignment is not a
generic coincidence among independently chosen subspaces, but a
structured incidence condition. Section~\ref{sec:moduli} explains why
this incidence locus is canonical under layerwise reparametrization.
\vspace{-2mm}
\paragraph{Geometric interpretation.}
By Proposition~\ref{prop:flag_dimension}, flag-aligned configurations
occupy a proper closed subvariety of the space of independently chosen
subspaces—a structured lower-dimensional, hence measure-zero, position rather than
a generic coincidence. Nested subspaces are more constrained than independently
chosen ones, and this constraint is what Section~\ref{sec:moduli}
identifies as the canonical polystable geometry under reparametrization.

\begin{corollary}[Minimal Cumulative Drift]
\label{cor:drift}
Among all sequences of subspaces with prescribed dimensions,
the cumulative drift functional
\begin{equation}
\mathcal{D}(V_1,\ldots,V_L)
=\sum_{i=1}^{L-1}\|P_{V_{i+1}}-P_{V_i}\|_F^2
\end{equation}
is minimized precisely by flag-structured configurations. \textit{Proof in Appendix~\ref{app:proof_drift}.}
\end{corollary}

Flag alignment also yields a direct parameter efficiency 
benefit.
\begin{corollary}[Parameter Efficiency]
\label{cor:param}
Sequential systems with flag-structured representations admit, under a shared flag-basis parameterization, $O(dR)$ parameters, where $R = \max_i r_i$,
compared to $O(d\sum_i r_i)$ for independent low-rank updates~\cite{hu2022lora, denil2013predicting} at each layer. \textit{Proof in Appendix~\ref{app:proof_eff}.}
\end{corollary}

\vspace{-3mm}
\section{Invariant Geometry of Alignment}\vspace{-3mm}
\label{sec:moduli}
The preceding results quantify the geometric advantage of flag-aligned
configurations. We now identify the invariant relative-position geometry
underlying layerwise alignment.
\begin{theorem}[Layerwise Necessity of Flag Geometry]
\label{thm:local_flag_moduli}
Fix integers $r_1 \le r_2 \le d$.
Consider the natural action of $\mathrm{GL}(d)$ on
\[
\mathrm{Gr}(r_1,d) \times \mathrm{Gr}(r_2,d).
\]\vspace{-2mm}
Then:
\begin{enumerate}
\item The space admits a stratification by relative position~\cite{mumford1994git} of $(V,U)$,
indexed by $\dim(V\cap U)$.
\item The locus
\begin{equation}
\mathcal{F}_{r_1,r_2}:=\{(V,U):V\subseteq U\}
\label{eq:flag_locus}
\end{equation}
is a closed subvariety~\cite{mumford1994git} canonically isomorphic to the partial flag variety
$\mathrm{Fl}(r_1,r_2;d)$.

\item With respect to the natural Plücker linearization~\cite{mumford1994git},
$\mathcal{F}_{r_1,r_2}$ is the unique polystable relative-position
stratum. Equivalently, it is the closed stratum of maximal intersection
dimension; every lower-incidence stratum has
$\mathcal{F}_{r_1,r_2}$ in its orbit closure and is not polystable.
\end{enumerate}

In particular, flag geometry is the canonical closed, polystable
configuration at the layerwise level. \textit{Proof in Appendix~\ref{app:proof_moduli}.}
\end{theorem}

\vspace{-2mm}
\paragraph{Scope and architectural generality.}
Theorem~\ref{thm:local_flag_moduli} is purely layerwise and independent of
network architecture.
It applies to any model admitting a representation interface $(V,U)$
between successive modules, including feedforward networks, residual
architectures, and attention-based models such as Transformers.

The result is geometric rather than dynamical: it characterizes which
\emph{local} alignment patterns are well-defined at a single interface.
In architectures where representations are updated additively or
composed sequentially—most notably residual networks—such layerwise
flag-compatible interfaces can be coherently extended across
depth to form a \emph{global} flag geometry.
Whether and to what extent training dynamics enforce layerwise
flag-compatible alignment is analyzed in Section~\ref{sec:dynamics}.
Beyond architectural scope, Theorem~\ref{thm:local_flag_moduli} 
has a direct implication for the choice of alignment metric.
\paragraph{Methodological implications: observability and invariance.}
Polystability selects closed representatives of the relative-position
geometry, giving canonical alignment configurations independent of
coordinate choices. The following result makes the corresponding invariance principle precise: the only full $\mathrm{GL}(d)$-invariant
alignment datum is the intersection dimension; practical subspace
metrics such as CKA and principal angles~\cite{kornblith2019similarity, raghu2017svcca, morcos2018insights} are Euclidean relaxations of
this intrinsic incidence geometry.
Thus practical metrics such as principal angles, CCA, CKA, and
Frobenius overlap should be understood as Euclidean relaxations of this
intrinsic incidence geometry.

\begin{corollary}[No-go for Coordinate-Free Alignment Measures]
\label{thm:no_go_alignment}
Let $f \colon \mathrm{Gr}(r_1,d)\times\mathrm{Gr}(r_2,d)\to\mathbb{R}$
be $\mathrm{GL}(d)$-invariant.
Then there exists $\phi\colon\{0,\ldots,\min(r_1,r_2)\}\to\mathbb{R}$
such that $f(V,U)=\phi(\dim(V\cap U))$.
In particular, $f$ cannot distinguish configurations with the same
intersection dimension.
\end{corollary}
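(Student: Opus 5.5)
The plan is to show that $\mathrm{GL}(d)$ acts on $\mathrm{Gr}(r_1,d)\times\mathrm{Gr}(r_2,d)$ with finitely many orbits, and that these orbits are exactly the relative-position strata of Theorem~\ref{thm:local_flag_moduli}(1), indexed by $k=\dim(V\cap U)$. Once that is established, any invariant $f$ is constant on orbits. Hence $f$ factors through the orbit label $k$, and we set $\phi(k)$ to be the common value of $f$ on the $k$-th orbit. On values of $k$ that are not realized, $\phi$ is defined arbitrarily, for instance as $0$; this is harmless because the statement only asks for existence of $\phi$ on $\{0,\ldots,\min(r_1,r_2)\}$.

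The key step is transitivity of $\mathrm{GL}(d)$ on pairs with a fixed intersection dimension. Given $(V,U)$ with $\dim(V\cap U)=k$, I will build an adapted basis as follows.
\begin{itemize}
\item Choose a basis $e_1,\ldots,e_k$ of $W=V\cap U$.
\item Extend it by $f_1,\ldots,f_{r_1-k}$ to a basis of $V$.
\item Extend it by $g_1,\ldots,g_{r_2-k}$ to a basis of $U$.
\end{itemize}
The combined family $\{e_i,f_j,g_l\}$ is linearly independent. Indeed, a relation $\sum a_ie_i+\sum b_jf_j=-\sum c_lg_l$ puts the common vector in $V\cap U=W$. This forces the $c_l$ to vanish, since the $g_l$ are independent modulo $W$, and then all remaining coefficients vanish. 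The combined family spans $V+U$, of dimension $r_1+r_2-k\le d$, so we extend it by further vectors to a basis of $\mathbb{R}^d$.

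For a second pair $(V',U')$ with the same $k$, we do the same, and the linear map sending one adapted basis to the other lies in $\mathrm{GL}(d)$ and carries $(V,U)$ to $(V',U')$. Conversely, $\dim(gV\cap gU)=\dim g(V\cap U)=\dim(V\cap U)$, so $k$ is itself an invariant. Thus the orbits are exactly the level sets of $k$.

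The only point needing care is bookkeeping: which $k$ actually occur, namely $\max(0,r_1+r_2-d)\le k\le\min(r_1,r_2)$. This is handled by the arbitrary extension of $\phi$ described above.

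No continuity of $f$ is needed, since the argument is purely orbit-theoretic. I would close by remarking that, because the lower-incidence orbits contain the flag orbit in their closure by Theorem~\ref{thm:local_flag_moduli}(3), a \emph{continuous} invariant $f$ would be forced to be constant. This explains the phrasing "cannot distinguish," but that remark is not required for the corollary as stated.
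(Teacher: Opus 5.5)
Your proposal is correct and follows the paper's proof: the $\mathrm{GL}(d)$-orbits are exactly the nonempty strata $X_k$, so an invariant $f$ is constant on each one and factors through $k=\dim(V\cap U)$. The differences are welcome additions rather than a change of route: you prove transitivity on each $X_k$ with an adapted basis for $V\cap U$, $V$, $U$ and $\mathbb{R}^d$, which the paper only asserts as standard in the proof of Theorem~\ref{thm:local_flag_moduli}(1), and you define $\phi$ explicitly on the unrealized values $k<\max(0,r_1+r_2-d)$.
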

\vspace{-4mm}
\begin{proof}
By Theorem~\ref{thm:local_flag_moduli}(1), the $\mathrm{GL}(d)$-orbits
are exactly the nonempty strata $X_k=\{\dim(V\cap U)=k\}$, and the action on each
stratum is transitive.
Hence any $\mathrm{GL}(d)$-invariant $f$ is constant on each $X_k$;
defining $\phi(k)$ to be this value gives $f(V,U)=\phi(\dim(V\cap U))$.
\end{proof}
\vspace{-2mm}
Non-flag configurations admit no canonical representatives: their relative
positions depend on arbitrary coordinate choices and cannot be meaningfully
compared across runs or architectures.
Individual neurons are not stable geometric objects; any invariant alignment
measure must be sought at the subspace level.
This explains why empirical alignment studies—across convolutional neural networks (CNNs), residual networks (ResNets), and Transformers—use subspace-level metrics (principal angles, CCA, CKA) rather than neuron-level comparisons.
We note that principal angles, CCA, CKA, and the Frobenius overlap
$R^{(\ell)}$ used in Section~\ref{sec:transformer_flag} are not
full $\mathrm{GL}(d)$-invariants; they are computable Euclidean
relaxations of the underlying incidence geometry, stable under
orthogonal changes of basis.

\vspace{-3mm}
\section{Subspace Alignment Dynamics}
\label{sec:dynamics}
\vspace{-3mm}
The invariant geometry of Section~\ref{sec:moduli} establishes that
subspace incidence is the intrinsic language for alignment. We now show
how this geometry appears in terminal alignment dynamics:
ridge regularization drives the classifier subspace into the feature
subspace (Level-2 alignment), while nonlinear activations induce a
commutator obstruction to exact basis alignment (Level-3).
Together, these two results give a geometric explanation of the
Level-2/3 alignment hierarchy observed in empirical Neural Collapse
studies~\cite{papyan2020prevalence,rangamani2023feature}.

Both results in this section operate under the following regularity
assumption on the final nonlinear activation. The Level-2 result requires no additional assumptions beyond 
the training setup. The Level-3 obstruction, however, depends 
on the geometry of the activation Jacobian, which we 
characterize via the following regularity condition. Theorem~\ref{thm:l2_attraction} (Level-2 attraction) does not require
this assumption—its proof uses only linear algebra—but
Theorem~\ref{thm:obstruction_main} (Level-3 obstruction) relies on it
to characterize the activation Jacobian.

Throughout this section, $V := \mathrm{Im}(H)$ denotes the penultimate
feature subspace and $P_V$ its orthogonal projector; $U := \mathrm{Im}(W^\top)$
denotes the classifier subspace and $P_U$ its projector, corresponding
to the interface $(V,U)$ of Section~\ref{sec:moduli} at the final layer.
\begin{assumption}[Geometric Consistency]
\label{ass:geometric_consistency}
For the activation map $\phi : \mathbb{R}^{m} \to \mathbb{R}^{m}$
at the last nonlinear interface with Jacobian $J(x)=\nabla\phi(x)$,
assume $\phi$ is approximately conformal on the data manifold
$\mathcal{M}$:
\begin{equation}
\mathbb{E}_{x \sim \mathcal{M}} \bigl[J(x)^{\top} J(x)\bigr]
\approx c\, I_{m}, \qquad c>0 ,
\label{eq:geom_consistency}
\end{equation}
where the expectation is taken almost everywhere on $\mathcal{M}$.
\end{assumption}

\begin{remark}[When does Assumption~\ref{ass:geometric_consistency} hold?]
Approximate conformality arises in rectified linear unit (ReLU) networks
($\mathbb{E}[J^\top J]\approx \tfrac{1}{2}I$),
smooth activations such as Gaussian error linear unit (GELU) or sigmoid linear unit (SiLU) near initialization,
architectures with batch normalization,
and high-dimensional regimes where concentration of measure applies
\cite{pennington2018emergence,hanin2020finite}.
Such conditions have been studied in the dynamical isometry literature
\cite{tarnowski2019dynamical,pennington2017resurrecting}
and do not require isotropic feature covariances.
\end{remark}
\vspace{-5mm}
\subsection{Why Level-2 is the Stable Target}\vspace{-2mm}
\label{sec:level2}
We analyze the terminal training phase, during which features have
approximately converged and classifier dynamics dominate.
This is the regime studied in empirical Neural Collapse
work~\cite{papyan2020prevalence,rangamani2023feature};
the conclusion holds approximately whenever features change slowly
relative to the classifier.

\begin{theorem}[Level-2 Attraction]
\label{thm:l2_attraction}
Consider gradient flow on the ridge-regularized objective
\begin{equation}
\mathcal{L}(W) = \tfrac{1}{2}\|WH - Y\|_F^2
+ \tfrac{\lambda}{2}\|W\|_F^2 ,
\label{eq:ridge_loss}
\end{equation}
where $H\in\mathbb{R}^{d\times n}$ is the fixed feature matrix,
$Y\in\mathbb{R}^{c\times n}$ is the label matrix,
$W\in\mathbb{R}^{c\times d}$ is the trainable classifier, and
$\lambda>0$.

The objective in Eq.~\eqref{eq:ridge_loss} has the unique minimizer
\begin{equation}
W_\ast = YH^\top(HH^\top+\lambda I)^{-1},
\label{eq:ridge_solution}
\end{equation}
which satisfies $W_\ast(I-P_V)=0$, equivalently
$\mathrm{Im}(W_\ast^\top)\subseteq V$.
Moreover, the component orthogonal to $V$ decays exponentially
as shown in Eq.~\eqref{eq:exp_decay}:
\begin{equation}
\|W(t)(I-P_V)\|_F
=e^{-\lambda t}\|W(0)(I-P_V)\|_F.
\label{eq:exp_decay}
\end{equation}
Thus ridge regularization drives the classifier row space into the
feature subspace at rate $\lambda$.
If additionally $\mathrm{rank}(W_\ast)=\mathrm{rank}(H)$,
then $\mathrm{Im}(W_\ast^\top)=V$, which is
\emph{Level-2 feature--classifier alignment}. \textit{Proof in Appendix~\ref{app:proof_l2_attraction}.}
\end{theorem}

The rate $e^{-\lambda t}$ makes a falsifiable prediction:
larger weight decay $\lambda$ should accelerate Level-2 convergence.
The predicted rate dependence on $\lambda$
 is left for future experimental verification. This completes the picture for Level-2 
alignment. Level-3 alignment behaves differently.
\vspace{-3mm}
\subsection{Why Level-3 is Generically Blocked}
\label{sec:neural_collapse}
\vspace{-2mm}
Theorem~\ref{thm:l2_attraction} shows that ridge dynamics drive the
classifier row space into the feature subspace $V$, giving Level-2
alignment. A stronger requirement is Level-3 alignment: a basis-level identification between classifier and feature directions,
beyond equality of their spans. Such a
basis-level statement is sensitive to the activation metric induced by the
final nonlinear activation.

Under Assumption~\ref{ass:geometric_consistency}, we define the
activation-induced metric
\begin{equation}
D^2 := \mathbb{E}_x[D(x)^\top D(x)] .
\label{eq:activation_metric}
\end{equation}
If $D^2 = cI$, the activation is conformal on average and introduces no
preferred directions. In general, however, the anisotropic component of
$D^2$ may fail to preserve the feature subspace $V$. This failure is
measured by the commutator $[D^2,P_V]$.

\begin{theorem}[Commutator obstruction to Level-3 alignment]
\label{thm:obstruction_main}
Let $P_V$ be the orthogonal projector onto the fixed rank-$r$
feature subspace $V\subseteq\mathbb{R}^d$, and let $D^2$ be as above.
Then $[D^2,P_V]=0$ if and only if $V$ is an invariant subspace of $D^2$,
equivalently, the activation-induced metric has no off-diagonal block
between $V$ and $V^\perp$.
For generic $D^2$ with simple spectrum, this invariance condition has
codimension $r(d-r)$ in $\mathrm{Gr}(r,d)$, and is therefore generically
violated. \textit{Proof in Appendix~\ref{app:proof_obstruction}.}
\end{theorem}

Writing $D^2=cI+A$, the conformal part $cI$ commutes with every $P_V$;
the obstruction is entirely due to the anisotropic residual:
$[D^2,P_V]=[A,P_V]$.
Thus near-conformality does not require the obstruction to vanish; it
only makes the obstruction small when $A$ is small or aligned with $V$.

The commutator obstruction suggests a corresponding dynamic picture.
Appendix~\ref{app:dynamic_perspective} gives a heuristic near-conformal
derivation of the residual law \vspace{-2mm}
\begin{equation}
\frac{d}{dt}\|P_U-P_V\|_F^2
\;\approx\;
-\|\nabla \mathcal{R}\|_F^2
+\|[D^2,P_V]\|_F^2.
\label{eq:alignment_dynamics}
\end{equation}
The first term is the usual dissipative alignment term; the second is an
activation-induced residual that vanishes in linear networks and is
generically nonzero in nonlinear networks.
This yields a testable prediction, based on the dynamical law
in Eq.~\eqref{eq:alignment_dynamics}:
linear networks ($D^2=I$) satisfy $[D^2,P_V]=0$ exactly and therefore
do not face this activation-induced obstruction, while nonlinear networks
should exhibit a nonzero commutator associated with basis-level
instability. We verify this directly in Section~\ref{sec:exp_nc}.

\begin{remark}[Refinement of Neural Collapse]
Empirical Neural Collapse studies quantify alignment via
subspace-level metrics such as principal angles
\cite{papyan2020prevalence,rangamani2023feature}.
Our framework explains why this is the right choice:
Level-2 alignment is the robust coordinate-free target, while Level-3
basis identification requires the exceptional condition $[D^2,P_V]=0$.
By Corollary~\ref{thm:no_go_alignment}, intersection dimension is
also the unique reparameterization-invariant alignment measure,
so subspace metrics are not merely a convention but a mathematical
necessity.
Notably, Section~\ref{sec:exp_nc} shows that Level-2 alignment remains
strong even when Assumption~\ref{ass:geometric_consistency} is
violated (e.g.\ Vision Transformer (ViT)-Tiny~\cite{dosovitskiy2021vit} with GELU/ReLU activations), indicating that the empirical Level-2 phenomenon is more robust than
the sufficient conditions used in our dynamic approximation.
\end{remark}

\vspace{-5mm}
\section{Exploratory Extension: Attention Head Subspaces}
\label{sec:transformer_flag}
\vspace{-2mm}
The subspace-incidence viewpoint developed in Sections~\ref{sec:moduli}
and~\ref{sec:dynamics} suggests a natural diagnostic for Transformer
architectures: do attention heads occupy generic or special positions
in the Grassmannian of query subspaces?
We do not claim a complete theory of Transformer training dynamics. Rather, we treat this as an exploratory diagnostic: a 
first test of whether the flag-geometric language extends 
naturally beyond the Neural Collapse setting.
Instead, we show that (i)~a simplified model isolates one mechanism
by which weight decay can favor redundant head configurations,
(ii)~a Frobenius-based subspace overlap metric captures this tendency
in a geometrically principled and dimension-stable way, and
(iii)~pretrained Transformers exhibit markedly higher head subspace
overlap than random baselines.
\vspace{-2mm}
\paragraph{Setup.}
In a Transformer with $h$ attention heads at layer $\ell$,
the query projection matrices $W_Q^{(\ell,i)} \in \mathbb{R}^{d \times d_k}$
define head subspaces
$Q_i^{(\ell)} := \mathrm{col}(W_Q^{(\ell,i)}) \subset \mathbb{R}^d$.
By Corollary~\ref{thm:no_go_alignment}, the unique
$\mathrm{GL}(d)$-invariant summary of this configuration is the
collection of pairwise intersection dimensions.
We use the \emph{Frobenius overlap}
\begin{equation}
R^{(\ell)} :=
\frac{1}{h(h-1)}\sum_{i\neq j}
\frac{1}{d_k}\|P_i^{(\ell)} P_j^{(\ell)}\|_F^2 \in [0,1]
\label{eq:head_overlap}
\end{equation}
as an aggregate measure: $R^{(\ell)}=0$ when all heads are mutually
orthogonal and $R^{(\ell)}=1$ when all heads coincide.
Unlike the dimension-based score $\delta$, this metric is well-defined
even when $h\cdot d_k \ge d$.
\vspace{-6mm}
\subsection{A Simplified Weight-Sharing Model}
\vspace{-2mm}
The following result isolates one mechanism by which Euclidean
regularization can favor redundant head configurations.
It should be read as a toy-model insight, not as a complete model
of Transformer training: the actual coupling between
$W_Q,W_K,W_V,W_O$ and the softmax nonlinearity is not captured here.

\begin{proposition}[Weight Decay Favors Head Sharing]
\label{prop:wd_redundancy}
Consider two attention heads at a single layer with query matrices
$W_Q^{(1)}, W_Q^{(2)} \in \mathbb{R}^{d \times d_k}$.
Fix the sum $W := W_Q^{(1)} + W_Q^{(2)}$.
Then
\begin{equation}\label{eq:wd_sharing}
\|W_Q^{(1)}\|_F^2 + \|W_Q^{(2)}\|_F^2
\;\ge\;
\frac{1}{2}\|W\|_F^2
\end{equation}
with equality if and only if $W_Q^{(1)} = W_Q^{(2)} = W/2$,
i.e., both heads have identical query matrices and hence identical
column spaces, provided $W$ has rank $d_k$. \textit{Proof in Appendix~\ref{app:proof_wd_redundancy}.}
\end{proposition}

Proposition~\ref{prop:wd_redundancy}, via Eq.~\eqref{eq:wd_sharing}, shows that weight decay, under the idealised constraint of fixed aggregate query direction, selects for identical heads.
The degree to which this toy-model mechanism operates in full
multi-head attention—where $W_K$, $W_V$, $W_O$, and the softmax
nonlinearity all interact—remains an open question.
\vspace{-3mm}
\subsection{Empirical Observation}
\vspace{-2mm}
\begin{conjecture}[Static Head-Subspace Incidence]
\label{conj:static_head_incidence}
For attention-based networks trained with weight decay,
terminal query-head subspaces exhibit higher mean pairwise
Frobenius overlap $R^{(\ell)}$ than Haar-random subspaces
of the same dimensions. A stronger dynamical version—predicting that $R^{(\ell)}$
increases monotonically during training—requires training-time
checkpoints and is left for future work.

\end{conjecture}

\vspace{-4mm}
\section{Experiments}
\label{sec:experiments}
\vspace{-3mm}
We report two sets of experiments.
Section~\ref{sec:exp_nc} directly verifies the novel mechanistic
prediction of Theorem~\ref{thm:obstruction_main}: the commutator
$\|[D^2,P_V]\|_F$ should vanish exactly for linear networks and be
nonzero for nonlinear ones, with magnitude associated with the Level-3
alignment gap.
Section~\ref{sec:exp_transformer} provides exploratory evidence for
the subspace-incidence diagnostic across pretrained language models.
Full experimental details and additional architectures appear in
Appendix~\ref{app:experiments}.

\vspace{-2mm}
\subsection{Mechanistic Diagnostic for the Level-3 Obstruction}
\label{sec:exp_nc}
\vspace{-2mm}\paragraph{Setup.}
We evaluate three multilayer perceptron (MLP) architectures trained on CIFAR-10~\cite{krizhevsky2009learning}: a 50-layer 
linear MLP, a 10-layer ReLU MLP (Xavier init), and a 50-layer ReLU 
MLP (orthogonal init). For each model we extract the penultimate 
feature subspace and estimate $D^2$ via a single forward pass; full 
details in Appendix~\ref{app:nc_mechanism}.
\vspace{-4mm}
\begin{table}[h]
\centering
\small
\caption{Commutator diagnostic for Level-3 alignment. Linear networks yield zero commutator, while nonlinear networks exhibit a strictly positive obstruction. Values are mean $\pm$ std.}
\label{tab:commutator}

\begin{tabular}{@{}lccc@{}}
\toprule
Model & $c$ & $\|[D^2,P_V]\|_F$ & Level-3 \\
\midrule
MLP-linear-50        & $1.000\pm0.000$ & $0.000\pm0.000$ & well-def. \\
MLP-ReLU-10 (Xavier) & $0.648\pm0.071$ & $1.219\pm0.334$ & obstructed \\
MLP-ReLU-50 (orth)   & $0.425\pm0.038$ & $1.461\pm0.075$ & obstructed \\
\bottomrule
\end{tabular}
\end{table}

\begin{figure}[h]
\centering
\begin{minipage}[c]{0.55\textwidth}
\centering
\includegraphics[width=0.85\textwidth]{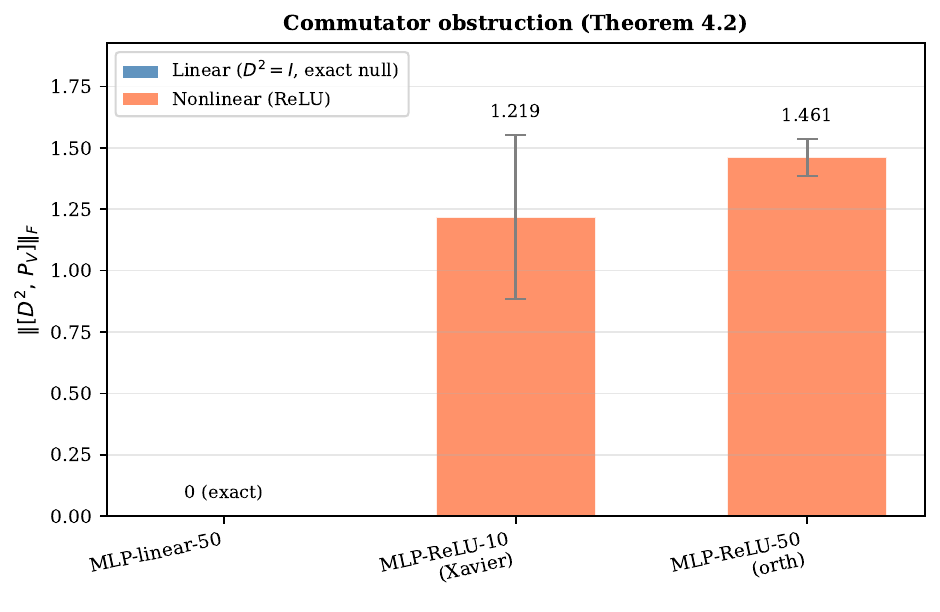}
\end{minipage}
\hfill
\begin{minipage}[c]{0.4\textwidth}
\caption{Commutator $\|[D^2,P_V]\|_F$  across models. Linear network vanishes exactly; nonlinear networks show consistent nonzero obstruction. Error bars: mean $\pm$ std.}
\label{fig:commutator}
\end{minipage}
\end{figure}

\vspace{-3mm}
\paragraph{Results.}




As shown in Table~\ref{tab:commutator} and Figure~\ref{fig:commutator}, 
the commutator behaves exactly as predicted by Theorem~\ref{thm:obstruction_main}. The linear model has $D^2 = I$, so
$[D^2, P_V] = [I, P_V] = 0$ exactly (Table~\ref{tab:commutator}), and Level-3 alignment is
well-defined across all seeds.
Both nonlinear models exhibit a nonzero commutator (Figure~\ref{fig:commutator}),
with the deeper model showing a larger and more stable obstruction
($1.461 \pm 0.075$ vs $1.219 \pm 0.334$).
\vspace{-3mm}
\paragraph{Broader validation.}
Additional architectures (ResNet-18/50~\cite{he2016resnet}, ViT-Tiny~\cite{dosovitskiy2021vit}) in
Appendix~\ref{app:nc_mechanism} support the same pattern: Level-2
alignment remains strong even when Assumption~\ref{ass:geometric_consistency}
is violated (e.g.\ ViT-Tiny with GELU/ReLU), while strict Level-3
diagnostics are gauge-sensitive or collapse to near-zero across all
tested nonlinear models.

\vspace{-4mm}
\subsection{Transformer: Head Subspace Incidence in Pretrained Models}
\label{sec:exp_transformer}
\vspace{-2mm}
\paragraph{Setup.}
We compute the mean pairwise Frobenius overlap $R^{(\ell)}$
(as defined in Eq.~\eqref{eq:head_overlap} in
Section~\ref{sec:transformer_flag}).
In practice, we use the equivalent expression
\begin{equation}
R^{(\ell)} :=
\frac{1}{h(h-1)}\sum_{i\neq j}
\frac{1}{d_k}\bigl\|U_i^{(\ell)\top} U_j^{(\ell)}\bigr\|_F^2,
\label{eq:head_overlap_U}
\end{equation}
which is equivalent to the projector-based definition since
$\|P_i^{(\ell)} P_j^{(\ell)}\|_F^2=
\|U_i^{(\ell)\top} U_j^{(\ell)}\|_F^2.
$
Here $U_i^{(\ell)}$ is the orthonormal column basis of the $i$-th
query projection $W_Q^{(\ell,i)}$, obtained via QR decomposition.
We use $R^{(\ell)}$ rather than a dimension-based score because
$h \cdot d_k \ge d$ in both models, causing dimension-based metrics
to saturate trivially.
We analyze \textbf{GPT-2 small~\cite{radford2019gpt2}} ($d=768$, $h=12$, $d_k=64$, $L=12$)
and \textbf{Llama-3.2-1B~\cite{meta2024llama3}} ($d=2048$, $h=32$, $d_k=64$, $L=16$),
comparing against a Haar-random subspace baseline with analytic
expectation $\mathbb{E}[R^{(\ell)}] = d_k/d$. A randomly initialized GPT-2 small serves as a sanity check.

\vspace{-4mm}
\begin{table}[h]
\centering
\small
\caption{Head-subspace overlap in pretrained Transformers. Pretrained GPT-2 and Llama exceed Haar-random baselines in all layers, while randomly initialized GPT-2 matches the analytic baseline.}
\label{tab:transformer}
\vspace{4pt}
\begin{tabular}{lcccc}
\toprule
Model & Baseline $d_k/d$ & $\bar R$ & Excess & Layers $> 2\sigma$ \\
\midrule
GPT-2 small (random init) & $0.083$ & $0.083$ & $0.000$ & $0/12$ \\
GPT-2 small (pretrained)  & $0.083$ & $0.188$ & $0.105$ & $12/12$ \\
Llama-3.2-1B (pretrained) & $0.031$ & $0.100$ & $0.069$ & $16/16$ \\
\bottomrule
\end{tabular}
\end{table}
\vspace{-4mm}
\begin{figure}[h]
\centering
\includegraphics[width=0.8\textwidth]{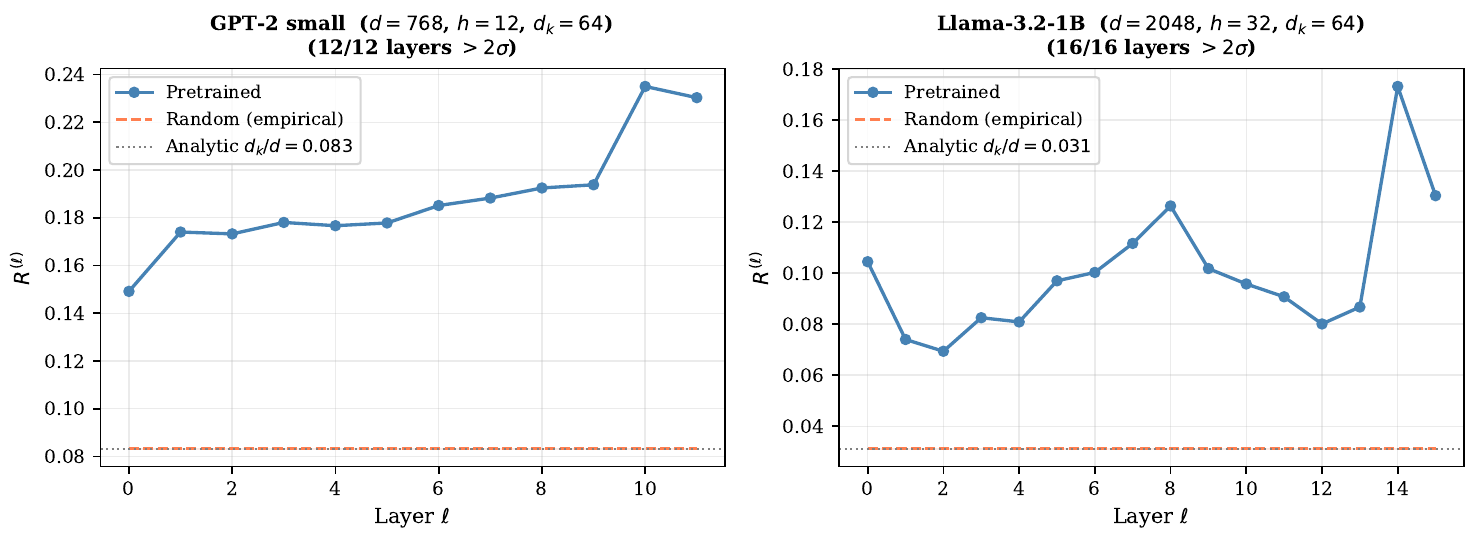}
\caption{Layerwise head-subspace overlap $R^{(\ell)}$. Pretrained GPT-2 and Llama stay above the Haar-random baseline across all layers.}
\label{fig:transformer}
\end{figure}

\vspace{-2mm}
\paragraph{Results.}
\vspace{-2mm}
Both pretrained models exceed the Haar baseline in every layer, 
as shown in Figure~\ref{fig:transformer} and Table~\ref{tab:transformer}, 
while randomly initialized GPT-2 remains at the analytic baseline $d_k/d$. This gap indicates that the excess overlap is a learned property rather than a consequence of dimensionality. GPT-2 also shows an increasing depth profile ($R^{(0)}\approx 0.149$ to $R^{(11)}\approx 0.234$),
consistent with the flag-geometric intuition that compositional
architectures accumulate subspace constraints across layers. The overlap $R^{(\ell)}$ thus serves as a 
geometry-aware diagnostic: it is exactly zero when Level-3 
alignment is theoretically attainable, and robustly nonzero 
when it is not—making the invisible obstruction visible. We interpret this as a static terminal diagnostic consistent with
Conjecture~\ref{conj:static_head_incidence}; checkpoint verification
is left for future work.

\vspace{-3mm}
\section{Discussion and Limitations}\vspace{-3mm}

Our results provide a geometric explanation of known alignment 
phenomena, not a complete training theory: the Level-2 theorem 
models only the terminal phase, the Level-3 obstruction relies on 
near-conformality of the activation Jacobian, and the Transformer 
results are exploratory diagnostics that cannot distinguish a few 
identical heads from many partially overlapping ones. Future work should test checkpoint dynamics, refine head-incidence
metrics, and connect flag structure to generalization or
interpretability. More broadly, the commutator diagnostic and head-subspace overlap metric suggest a complementary perspective on interpretability: rather than
only probing networks through input-output probes, geometric alignment
measures offer a weight-space window into the internal structure that
training has produced.
\vspace{-3mm}
\section{Conclusion}\vspace{-3mm}
Theoretical explanations of alignment phenomena in deep networks 
have largely been post-hoc: mathematics selected after the fact to 
describe what was already observed. We reverse this direction by 
identifying, from first principles, the mathematical structure that 
makes layerwise alignment coordinate-free—and find that flag varieties 
and geometric invariant theory provide a single unified language that 
simultaneously accounts for gradient flow alignment, the Level-2/3 
hierarchy in Neural Collapse, and attention head redundancy in 
Transformers. Theory need not merely trail observation; it can predict and 
constrain it.

The consequences are both theoretical and diagnostic: subspace 
incidence is the unique alignment observable that survives arbitrary 
reparameterization; the commutator magnitude and head subspace overlap 
make previously opaque internal structure measurable without forward 
passes; and weight decay and activation anisotropy become precise, 
falsifiable controls on what alignment training can and cannot 
achieve. The geometry of alignment is not merely describable—it is 
computable.

\vspace{-3mm}
\bibliographystyle{plainnat}
\bibliography{reference}

\clearpage
\section*{Supplementary Information}
\addcontentsline{toc}{section}{Supplementary Information}

\appendix

\setcounter{equation}{0}
\setcounter{figure}{0}
\setcounter{table}{0}
\renewcommand{\theequation}{A.\arabic{equation}}
\renewcommand{\thefigure}{A.\arabic{figure}}
\renewcommand{\thetable}{A.\arabic{table}}

\section{Proofs for Flag Varieties}
\label{app:proof_flag}
\subsection{Proof of Proposition~\ref{prop:flag_dimension} (Dimensional Reduction)}

\label{app:proof_flag_dimension}

\begin{proof}
The inclusion
$\mathrm{Fl}(r_1,\ldots,r_L;d)\subseteq \prod_{i=1}^L \mathrm{Gr}(r_i,d)$
is defined by the incidence conditions $V_i\subseteq V_{i+1}$.
These are closed conditions: the natural maps
$V_i\to \mathbb{R}^d/V_{i+1}$ vanish for all $i$. Hence the flag
configuration space is a closed subvariety.

The dimension of the independent configuration space is
\begin{equation}
\dim \prod_{i=1}^L \mathrm{Gr}(r_i,d)
=\sum_{i=1}^L r_i(d-r_i).
\label{eq:app_independent_dim}
\end{equation}
The standard dimension formula for the partial flag variety is
\begin{equation}
\dim \mathrm{Fl}(r_1,\ldots,r_L;d)
=\sum_{i=1}^L (r_i-r_{i-1})(d-r_i),
\qquad r_0:=0.
\label{eq:app_flag_dim}
\end{equation}
Comparing the two expressions shows that imposing the incidence
conditions removes degrees of freedom; the inequality is strict
whenever there is at least one genuine nesting constraint.
\end{proof}
\subsection{Proof of Corollary~\ref{cor:drift} (Minimal Cumulative Drift)}
\label{app:proof_drift}
\begin{proof}
For subspaces $V,U\subset\mathbb R^d$ of dimensions $r_1,r_2$,
\begin{equation}
\|P_V-P_U\|_F^2
=r_1+r_2-2\operatorname{tr}(P_VP_U)=
r_1+r_2-2\sum_{j=1}^{\min(r_1,r_2)}\cos^2\theta_j ,
\label{eq:app_projection_distance}
\end{equation}
where $\theta_j$ are the principal angles between $V$ and $U$.
This quantity is minimized when all principal angles corresponding to
the smaller subspace vanish, equivalently when the smaller subspace is
contained in the larger one. Thus the minimum is attained exactly by
flag-compatible pairs. Applying this pairwise along the chain gives the
claim.
\end{proof}

\subsection{Proof of Corollary~\ref{cor:param} (Parameter Efficiency)}
\label{app:proof_eff}
\begin{proof}
A basis for $V_L\in\mathrm{Gr}(r_L,d)$ requires $dr_L$ parameters
(up to the $r_L^2$ gauge freedom). Each $V_i$ is then determined by
choosing an $r_i$-dimensional subspace of $V_L$, which requires
$r_i(r_L-r_i)$ additional parameters—all within the already-stored
basis. The total is $O(dr_L)$. By contrast, independent bases for
$V_1,\ldots,V_L$ require $\sum_i dr_i = O(d\sum_i r_i)$ parameters.
\end{proof}

\setcounter{equation}{0}
\renewcommand{\theequation}{B.\arabic{equation}}

\section{Proof of Theorem~\ref{thm:local_flag_moduli} (Moduli-Theoretic Necessity)}
\label{app:proof_moduli}
\begin{theorem}[Theorem~\ref{thm:local_flag_moduli}, restated]
Fix integers $r_1 \le r_2 \le d$.
Consider the natural action of $\mathrm{GL}(d)$ on
\[
\mathrm{Gr}(r_1,d) \times \mathrm{Gr}(r_2,d).
\]

Then:

\begin{enumerate}
\item The space admits a stratification by relative position of $(V,U)$,
indexed by $\dim(V \cap U)$.

\item The locus
\begin{equation}
\mathcal{F}_{r_1,r_2} := \{(V,U) : V\subseteq U\}
\label{eq:appB_flag_locus}
\end{equation}
is a closed subvariety canonically isomorphic to the partial flag variety
$\mathrm{Fl}(r_1,r_2;d)$.

\item The locus $\mathcal F_{r_1,r_2}$ is the unique polystable stratum.
Every lower-incidence stratum admits a one-parameter degeneration whose
limit lies in $\mathcal{F}_{r_1,r_2}$, and hence is not polystable.
\end{enumerate}

In particular, flag geometry is the unique geometrically stable
configuration at the layerwise level.
\end{theorem}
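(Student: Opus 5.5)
The plan is to reduce all three parts to linear algebra in adapted bases, and to read ``polystable'' in part (3) as ``closed orbit of the relative-position stratification.'' For part (1), I will show that the $\mathrm{GL}(d)$-orbits on $\mathrm{Gr}(r_1,d)\times\mathrm{Gr}(r_2,d)$ are exactly the sets
\[
X_k=\{\dim(V\cap U)=k\},\qquad \max(0,r_1+r_2-d)\le k\le r_1 .
\]

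\textbf{Part (1).} Given $(V,U)\in X_k$, I choose a basis adapted to the configuration:
\begin{itemize}
\item $w_1,\dots,w_k$ of $W=V\cap U$;
\item $v'_1,\dots,v'_{r_1-k}$ completing it to a basis of $V$;
\item $u'_1,\dots,u'_{r_2-k}$ completing it to a basis of $U$;
\item the remaining vectors completing everything to a basis of $\mathbb{R}^d$.
\end{itemize}
These vectors are independent because $\dim(V+U)=r_1+r_2-k$. Any two pairs in $X_k$ are then related by the element of $\mathrm{GL}(d)$ sending one adapted basis to the other, so $X_k$ is a single orbit and the $X_k$ partition the space. Each $X_k$ is nonempty exactly in the stated range. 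Upper semicontinuity of $\dim(V\cap U)$ (a rank condition on the $d\times(r_1+r_2)$ matrix $[\,\text{basis of }V\mid\text{basis of }U\,]$) shows that each $\bigcup_{j\ge k}X_j$ is closed. This gives a genuine stratification, and in particular $X_{r_1}$ is closed.

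\textbf{Part (2).} Here $\mathcal F_{r_1,r_2}=X_{r_1}$, which is closed by the rank condition above, or equivalently by the vanishing of $V\to\mathbb{R}^d/U$ as in the proof of Proposition~\ref{prop:flag_dimension}. The map $(V,U)\mapsto(V\subseteq U)$ identifies $\mathcal F_{r_1,r_2}$ with $\mathrm{Fl}(r_1,r_2;d)$. The identification is an isomorphism because both sides are cut out by the same incidence equations inside $\mathrm{Gr}(r_1,d)\times\mathrm{Gr}(r_2,d)$, and it is $\mathrm{GL}(d)$-equivariant and therefore canonical.

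\textbf{Part (3): degeneration.} For $(V,U)\in X_k$ with $k<r_1$, I will construct an explicit one-parameter subgroup degenerating $(V,U)$ into $\mathcal F_{r_1,r_2}$, working in the adapted basis above.
\begin{itemize}
\item Since $r_1-k\le r_2-k$, fix an injective linear map $A:\mathrm{span}(v'_j)\to\mathrm{span}(u'_j)$, for instance $Av'_j=u'_j$.
\item Let $C=\mathrm{span}(v'_j-Av'_j)$. Then $C\cap U=0$, so I can take a complement $C\oplus C''$ of $U$ that contains $C$.
\item Define $\lambda(t)$ to act as the identity on $U\oplus C''$ and as multiplication by $t$ on $C$.
\end{itemize}
Then $\lambda(t)$ fixes $U$ and sends $V$ to
\[
V_t=\mathrm{span}\bigl(w_i,\;Av'_j+t(v'_j-Av'_j)\bigr),
\]
which converges in the Grassmannian as $t\to0$ to $W\oplus A(\mathrm{span}\,v'_j)\subseteq U$. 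This limit has dimension $r_1$ because $A$ is injective, so the limit lies in $\mathcal F_{r_1,r_2}$. Hence $\mathcal F_{r_1,r_2}\subseteq\overline{X_k}$ for every $k$. Since each $X_k$ is a single orbit by part (1), a lower stratum has strictly larger closure than itself and is therefore not closed. The only closed orbit is $X_{r_1}$, which is minimal in the closure order.

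\textbf{Part (3): the main obstacle.} The difficult step is making ``polystable with respect to the Plücker linearization'' precise. The group $\mathrm{GL}(d)$ acting on a projective variety has no literal semistable locus for the full group: the centre acts by a nontrivial character on the Plücker line bundle, so the only invariant sections are constants. I would therefore state explicitly that polystability is understood via the Hilbert--Mumford criterion applied to the stratification, namely that a stratum is polystable exactly when its orbit is closed and no one-parameter-subgroup limit leaves it. With this definition, the degeneration $\lambda(t)$ above supplies the destabilizing limits for every lower stratum. The closedness of $X_{r_1}$ from part (1) supplies the polystability of $\mathcal F_{r_1,r_2}$, since any limit of points of the closed orbit $X_{r_1}$ stays in $X_{r_1}$. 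Uniqueness then follows from the fact, also from part (1), that $X_{r_1}$ is the only closed orbit.
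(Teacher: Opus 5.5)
Your proposal is correct and follows the paper's proof essentially step for step: the same adapted-basis classification of orbits, the same incidence-equation argument for closedness, and the same one-parameter subgroup (your $C=\mathrm{span}(v'_j-Av'_j)$ is the paper's $\mathrm{span}(e_{r_2+1},\dots,e_{r_2+s})$, with $Av'_j=e_{k+j}$ in its normal form). The only divergence is at polystability: the paper gets closedness of $\mathcal F_{r_1,r_2}$ from its parabolic stabilizer and then asserts that closed orbits are polystable without ever verifying semistability, so your diagnosis of the obstacle is right and your closed-orbit reading of (3) is exactly what both arguments establish; indeed, for $0<r_1\le r_2<d$, nonzero $\mathrm{SL}(d)$-invariant sections of any ample $\mathcal{O}(a,b)$ exist only when $r_1+r_2=d$, and they then vanish on $\mathcal F_{r_1,r_2}$, so in literal GIT terms the flag locus is unstable.
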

\begin{proof}
We consider the natural action of $G := \mathrm{GL}(d)$ on
\begin{equation}
X := \mathrm{Gr}(r_1,d)\times \mathrm{Gr}(r_2,d),
\label{eq:appB_X_def}
\end{equation}
given by $g\cdot(V,U) := (gV, gU)$.

\paragraph{1. Stratification by relative position.}
The $G$-orbits in $X$ are classified by the relative position of the pair
$(V,U)$, equivalently by the integer
\begin{equation}
k := \dim(V\cap U),
\qquad 0 \le k \le r_1.
\label{eq:appB_k_def}
\end{equation}
This yields a finite stratification of $X$ into locally closed subsets
\begin{equation}
X = \bigsqcup_{k=0}^{r_1} X_k,
\quad
X_k := \{(V,U) : \dim(V\cap U)=k\}.
\label{eq:appB_stratification}
\end{equation}
This is standard: $\mathrm{GL}(d)$ acts transitively on pairs of subspaces with fixed dimensions and fixed intersection dimension, so each nonempty $X_k$ is a single $G$-orbit.

\paragraph{2. The flag locus as a closed subvariety.}
The locus
\begin{equation}
\mathcal{F}_{r_1,r_2} := \{(V,U)\in X : V\subseteq U\}
\label{eq:appB_flag_locus2}
\end{equation}
coincides with the stratum $X_{r_1}$.
It is Zariski closed: the condition $V\subseteq U$ is equivalent to
the vanishing of the natural map $V\to\mathbb{R}^d/U$, hence is given
by rank equations on the product of Grassmannians.
Moreover, the map
\[
(V,U)\longmapsto (V\subset U\subset \mathbb{R}^d)
\]
identifies $\mathcal{F}_{r_1,r_2}$ canonically with the partial flag variety
$\mathrm{Fl}(r_1,r_2;d)$.

\paragraph{3. Uniqueness of the flag stratum.}

\textit{Step 1 (Non-polystability of non-flag strata).}
Let $(V,U)\in X_k$ with $k<r_1$, where
\begin{equation}
X_k=\{(V,U):\dim(V\cap U)=k\}.
\label{eq:appB_Xk_def}
\end{equation}
Set $s:=r_1-k$. Since $\dim(V+U)=r_1+r_2-k=r_2+s\le d$,
after a change of basis we may write
\begin{equation}
U=\mathrm{span}(e_1,\dots,e_{r_2}),
\label{eq:appB_U_form}
\end{equation}
and
\begin{equation}
V=\mathrm{span}\bigl(
e_1,\dots,e_k,\,
e_{r_2+1}+e_{k+1},\dots,
e_{r_2+s}+e_{k+s}
\bigr).
\label{eq:appB_V_form}
\end{equation}
Then $\dim(V\cap U)=k$.

We use a standard one-parameter subgroup degeneration argument, as in
the Hilbert--Mumford criterion of geometric invariant theory
\cite{mumford1994git}. Define a one-parameter subgroup
$\lambda:\mathbb{G}_m\to \mathrm{GL}(d)$ by
\begin{equation}
\lambda(t)e_i =
\begin{cases}
t\,e_i, & i=r_2+1,\dots,r_2+s,\\
e_i, & \text{otherwise},
\end{cases}
\label{eq:appB_lambda}
\end{equation}
where $\mathbb{G}_m$ denotes the multiplicative group $\mathbb{R}^{\times}$.
Then $\lambda(t)\cdot U=U$ for all $t$, while for $j=1,\dots,s$,
\begin{equation}
\lambda(t)\bigl(e_{r_2+j}+e_{k+j}\bigr)
=t e_{r_2+j}+e_{k+j}
\longrightarrow e_{k+j}
\qquad (t\to0).
\label{eq:appB_limit_vector}
\end{equation}
Hence, taking the limit in $\mathrm{Gr}(r_1,d)$ via the Plücker embedding,
\begin{equation}
\lim_{t\to0}\lambda(t)\cdot V
=V':=
\mathrm{span}(e_1,\dots,e_k,e_{k+1},\dots,e_{k+s})
\subset U.
\label{eq:appB_limit_V}
\end{equation}
Therefore
\begin{equation}
\lim_{t\to0}\lambda(t)\cdot(V,U)=(V',U)\in\mathcal{F}_{r_1,r_2}.
\label{eq:appB_limit_pair}
\end{equation}

This shows that the $G$-orbit of $(V,U)$ is not closed and that its closure
intersects the flag locus $\mathcal{F}_{r_1,r_2}$.
Consequently, every stratum $X_k$ with $k<r_1$ consists of non-polystable
points in the Plücker-linearized relative-position quotient.

\textit{Step 2 (Polystability and uniqueness).}
For any $(V\subset U)\in\mathcal{F}_{r_1,r_2}$, let
\begin{equation}
P:=\mathrm{Stab}_G(V,U).
\label{eq:appB_stabilizer}
\end{equation}
The subgroup $P$ contains the stabilizer of a complete flag refining
$V\subset U$, hence contains a Borel subgroup. By the standard criterion
that a closed subgroup of a linear algebraic group is parabolic if and
only if it contains a Borel subgroup \cite{borel1991linear}, $P$ is
parabolic.

It follows that
\begin{equation}
G\cdot(V,U)\simeq \mathrm{GL}(d)/P
\label{eq:appB_orbit}
\end{equation}
is projective \cite{borel1991linear}. Since the orbit map
\[
\mathrm{GL}(d)/P
\longrightarrow
\mathrm{Gr}(r_1,d)\times\mathrm{Gr}(r_2,d)
\]
has image $G\cdot(V,U)=\mathcal{F}_{r_1,r_2}$, this image is closed.
Thus all points in $\mathcal{F}_{r_1,r_2}$ have closed orbits.

By the Hilbert--Mumford criterion, closed orbits in the semistable locus
are polystable \cite{mumford1994git}. Hence all points in
$\mathcal{F}_{r_1,r_2}$ are polystable.

On the other hand, Step~1 shows that every non-flag orbit has
$\mathcal{F}_{r_1,r_2}$ in its closure and is not closed, hence is not
polystable. Therefore $\mathcal{F}_{r_1,r_2}$ is the unique polystable
stratum of the layerwise configuration space.
\paragraph{Conclusion.}
The layerwise configuration space admits a finite stratification by relative position,
with a unique geometrically stable (polystable) locus given by the flag variety
$\mathrm{Fl}(r_1,r_2;d)$.
This completes the proof.
\end{proof}

\setcounter{equation}{0}
\renewcommand{\theequation}{C.\arabic{equation}}
\section{Proofs for Neural Collapse}
\label{app:proof_nc}
\subsection{Proof of Theorem~\ref{thm:l2_attraction} (Level-2 Attraction)}
\label{app:proof_l2_attraction}
\begin{proof}
The gradient flow satisfies
\begin{equation}
\dot{W} = -(WH - Y)H^\top - \lambda W.
\label{eq:appC_gradient_flow}
\end{equation}

\medskip
\noindent

Since $H^\top(I - P_V) = 0$, multiplying on the right gives
\begin{equation}
\frac{d}{dt}[W(I - P_V)] = -\lambda W(I - P_V),
\label{eq:appC_proj_dynamics}
\end{equation}
hence
\begin{equation}
W(t)(I - P_V) = e^{-\lambda t} W(0)(I - P_V).
\label{eq:appC_proj_solution}
\end{equation}

\medskip
\noindent

The loss is strictly convex, so $W(t) \to W_\ast$, where
\begin{equation}
W_\ast = YH^\top(HH^\top + \lambda I)^{-1}.
\label{eq:appC_W_star}
\end{equation}
Since $\mathrm{Im}(HH^\top) \subseteq V$ and $HH^\top$ acts trivially on $V^\perp$,
we have $(HH^\top + \lambda I)|_{V^\perp} = \lambda I$.
Therefore
\begin{equation}
W_\ast(I - P_V) = YH^\top(HH^\top + \lambda I)^{-1}(I - P_V)
= \tfrac{1}{\lambda} YH^\top(I - P_V) = 0.
\label{eq:appC_Wstar_proj_zero}
\end{equation}
\medskip
\noindent

Thus $\mathrm{Im}(W_\ast^\top) \subseteq V$.
If $\mathrm{rank}(W_\ast)=\mathrm{rank}(H)$, then equality holds:
\begin{equation}
\mathrm{Im}(W_\ast^\top)=V.
\label{eq:appC_image_equality}
\end{equation}
Since $W(t) \to W_\ast$, the corresponding subspaces converge.
\end{proof}

\subsection{Proof of Theorem~\ref{thm:obstruction_main}
(Commutator Obstruction)}
\label{app:proof_obstruction}

\begin{theorem}[Commutator obstruction, restated]
Let $P_V$ be the orthogonal projector onto an $r$-dimensional subspace
$V\subseteq\mathbb{R}^d$, and let $M:=D^2=\mathbb{E}_x[D(x)^\top D(x)]$
be symmetric positive semidefinite. Then $[M,P_V]=0$ if and only if $V$
is an invariant subspace of $M$. If $M$ has simple spectrum (i.e., distinct eigenvalues), this
condition has codimension $r(d-r)$ in $\mathrm{Gr}(r,d)$.
\end{theorem}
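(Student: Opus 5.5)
The plan is to work in block form with respect to the orthogonal decomposition $\mathbb{R}^d = V\oplus V^\perp$. The first claim then reduces to a one-line matrix computation. The codimension claim reduces to a spectral description of the invariant subspaces of a symmetric matrix with simple spectrum.

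\emph{Step 1 (block form of the commutator).} Choose an orthonormal basis adapted to $V\oplus V^\perp$. In it
\[
P_V=\begin{pmatrix} I_r & 0\\ 0 & 0\end{pmatrix},\qquad
M=\begin{pmatrix} A & B\\ B^\top & C\end{pmatrix},
\]
with $A=A^\top$ and $C=C^\top$, since $M$ is symmetric. A direct multiplication gives
\[
[M,P_V]=MP_V-P_VM=\begin{pmatrix} 0 & -B\\ B^\top & 0\end{pmatrix},
\]
so that $\|[M,P_V]\|_F^2=2\|B\|_F^2$. Hence $[M,P_V]=0$ if and only if $B=0$. The off-diagonal block $B^\top$ is exactly the map $V\to V^\perp$ given by $v\mapsto (I-P_V)Mv$. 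So $B=0$ is equivalent to $MV\subseteq V$, i.e.\ $V$ is $M$-invariant. This is also the stated ``no off-diagonal block'' formulation.

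For the converse direction, if $MV\subseteq V$ then symmetry gives $MV^\perp\subseteq V^\perp$. Both off-diagonal blocks therefore vanish and $M$ commutes with $P_V$. Positive semidefiniteness is not needed; only symmetry is used.

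\emph{Step 2 (invariant subspaces under simple spectrum).} Let $M=\sum_{i=1}^d \mu_i q_iq_i^\top$ with $\mu_1,\dots,\mu_d$ distinct. Suppose $V$ is $M$-invariant. Then $M|_V$ is a symmetric operator on $V$, so it has an orthonormal eigenbasis in $V$. Each such eigenvector is an eigenvector of $M$. Because every eigenspace of $M$ is one-dimensional, each of these vectors is $\pm q_i$ for some $i$. Thus $V=\mathrm{span}(q_i: i\in S)$ for some $S$ with $|S|=r$.

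Conversely, every such span is invariant. So the invariant locus is the finite set of $\binom{d}{r}$ coordinate subspaces in the eigenbasis. This is a $0$-dimensional subvariety of $\mathrm{Gr}(r,d)$, which has dimension $r(d-r)$. Its codimension is therefore $r(d-r)$. In particular, a generic $V$ violates the condition, as asserted in Theorem~\ref{thm:obstruction_main}.

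\emph{Step 3 (transversality check, the delicate point).} The main subtlety is making ``codimension $r(d-r)$'' meaningful scheme-theoretically rather than only set-theoretically. I would do this by viewing $V\mapsto B(V)\in\mathrm{Hom}(V,V^\perp)\cong T_V\mathrm{Gr}(r,d)$ as a section of the tangent bundle. The invariant locus is its zero set. The aim is to show that these zeros are nondegenerate, so the condition cuts out a reduced $0$-dimensional locus.

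At $V=\mathrm{span}(q_i:i\in S)$, move $V$ as the graph of a small $X\in\mathrm{Hom}(V,V^\perp)$. Linearizing $B$ in $X$ should give the Sylvester operator $X\mapsto CX-XA$. Here $A=\mathrm{diag}(\mu_i)_{i\in S}$ and $C=\mathrm{diag}(\mu_j)_{j\notin S}$. This operator acts entrywise by multiplication by $\mu_j-\mu_i\neq 0$, so it is invertible.

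The $r(d-r)$ scalar equations $B=0$ are therefore independent at each solution. This confirms the codimension count, and shows the count is stable under small perturbations of $M$ within the simple-spectrum class.
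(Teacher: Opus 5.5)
Your proposal is correct and takes the paper's route. The paper likewise uses a block decomposition with respect to $V\oplus V^\perp$, uses symmetry of $M$ to get $M$-invariance of $V^\perp$ for the converse, and under simple spectrum notes that invariant $r$-dimensional subspaces are spans of $r$ eigenvectors, hence a finite set in the $r(d-r)$-dimensional Grassmannian. Your Step~3, the nondegeneracy check via the Sylvester operator $X\mapsto CX-XA$ with entrywise factors $\mu_j-\mu_i\neq 0$, is a correct refinement the paper does not attempt (its codimension claim is only set-theoretic), and it is not needed for the statement as posed.
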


\begin{proof}
By definition, $[M,P_V]=0$ if and only if $MP_V=P_VM$.

\textbf{($\Rightarrow$)} If $MP_V=P_VM$, then for any $v\in V$,
$Mv = MP_Vv = P_VMv \in V$, so $V$ is $M$-invariant.

\textbf{($\Leftarrow$)} If $V$ is $M$-invariant, then for any
$w\in V^\perp$ and $v\in V$, since $M$ is symmetric,
\begin{equation}
\langle Mw,v\rangle = \langle w,Mv\rangle = 0
\label{eq:appC_symmetry_inner}
\end{equation}
because $Mv\in V$ and $w\perp V$. Hence $V^\perp$ is also
$M$-invariant. With respect to the orthogonal decomposition
$\mathbb{R}^d = V\oplus V^\perp$, we may write
\begin{equation}
M = \begin{pmatrix} M_{VV} & 0 \\ 0 & M_{V^\perp V^\perp} \end{pmatrix},
\qquad
P_V = \begin{pmatrix} I & 0 \\ 0 & 0 \end{pmatrix},
\label{eq:appC_block_forms}
\end{equation}
so $[M,P_V]=0$.

In general, $[M,P_V]$ equals
\[
\begin{pmatrix} 0 & -M_{VV^\perp} \\ M_{V^\perp V} & 0 \end{pmatrix},
\]
which vanishes if and only if the off-diagonal blocks $M_{VV^\perp}$
and $M_{V^\perp V}$ are zero, i.e., $V$ is $M$-invariant.

Finally, if $M$ has simple spectrum, every $M$-invariant $r$-dimensional
subspace is spanned by a choice of $r$ eigenvectors, forming a finite
set in $\mathrm{Gr}(r,d)$. Since $\dim\mathrm{Gr}(r,d)=r(d-r)$, the
invariance condition has codimension $r(d-r)$.
\end{proof}

\begin{remark}[Relation to activation-transformed subspaces]
For a fixed symmetric positive definite Jacobian $D$ (the non-averaged case), $[D^2,P_V]=0$ is equivalent
to $DV=V$: the commutator measures whether the activation map preserves
the feature subspace.
\end{remark}

\subsection{Dynamic Perspective}
\label{app:dynamic_perspective}

Theorem~\ref{thm:obstruction_main} identifies the algebraic obstruction.
We now give a heuristic near-conformal derivation showing how the same
commutator appears as a residual term in alignment dynamics.

\paragraph{Setup.}
Write $D^2=cI+\Delta$ where $\Delta$ is symmetric (since $D^2$ is
symmetric) and controlled under
Assumption~\ref{ass:geometric_consistency}. The commutator satisfies
$[D^2,P_V]=[\Delta,P_V]$. In an orthonormal basis adapted to $V\oplus V^\perp$, since $\Delta$
is symmetric,
\begin{equation}
\|[D^2,P_V]\|_F^2
=
2\|P_V^\perp \Delta P_V\|_F^2,
\label{eq:appC_commutator_norm}
\end{equation}
which is the squared Frobenius norm of the off-diagonal block of $\Delta$
between $V$ and $V^\perp$.

\paragraph{Near-equilibrium alignment dynamics.}
Approximating the classifier subspace dynamics by gradient descent on
\begin{equation}
\mathcal{R}(P_U,P_V):=\|P_U-P_V\|_F^2,
\label{eq:appC_alignment_energy}
\end{equation}
and the feature subspace drift by the leading-order activation-induced
term
\begin{equation}
\dot P_V \approx \mathcal{L}_{P_V}(\Delta),
\qquad
\mathcal{L}_{P_V}(\Delta)
:=
P_V^\perp\Delta P_V+P_V\Delta P_V^\perp,
\label{eq:appC_L_operator}
\end{equation}
we compute
\begin{equation}
\frac{d}{dt}\mathcal{R}(P_U,P_V)
=
2\langle P_U-P_V,\dot P_U-\dot P_V\rangle_F.
\label{eq:appC_R_derivative}
\end{equation}
As a reduced near-equilibrium model, we approximate the classifier
subspace dynamics by gradient descent on the alignment energy:
\begin{equation}
\dot P_U\approx -\nabla_{P_U}\mathcal R.
\label{eq:appC_PU_dynamics}
\end{equation}
Since
\begin{equation}
\mathcal R(P_U,P_V)=\|P_U-P_V\|_F^2,
\qquad
\nabla_{P_U}\mathcal R=2(P_U-P_V),
\label{eq:appC_R_gradient}
\end{equation}
we have
\begin{equation}
2\langle P_U-P_V,-\nabla_{P_U}\mathcal R\rangle_F
=
-\|\nabla_{P_U}\mathcal R\|_F^2.
\label{eq:appC_dissipation}
\end{equation}
Together with the leading-order feature-subspace drift
\begin{equation}
\dot P_V\approx \mathcal L_{P_V}(\Delta)+O(\|\Delta\|^2),
\label{eq:appC_PV_dynamics_approx}
\end{equation}
this gives
\begin{equation}
\frac{d}{dt}\|P_U-P_V\|_F^2
\approx
-\|\nabla_{P_U}\mathcal{R}\|_F^2
-
2\langle P_U-P_V,\mathcal{L}_{P_V}(\Delta)\rangle_F
+
O(\|\Delta\|^2).
\label{eq:appC_alignment_dynamics}
\end{equation}

The second term is the forcing term induced by the nonlinear activation.
By Cauchy--Schwarz,
\begin{equation}
\bigl|
2\langle P_U-P_V,\mathcal{L}_{P_V}(\Delta)\rangle_F
\bigr|
\le
2\|P_U-P_V\|_F\,\|\mathcal{L}_{P_V}(\Delta)\|_F.
\label{eq:appC_CS_bound}
\end{equation}
Moreover, in any orthonormal basis adapted to
$\mathbb R^d=V\oplus V^\perp$,
\begin{equation}
\|\mathcal{L}_{P_V}(\Delta)\|_F
=
\|[D^2,P_V]\|_F
\label{eq:appC_L_commutator}
\end{equation}
because both quantities are determined by the off-diagonal block
$P_V^\perp\Delta P_V$.

Near equilibrium, the dissipative alignment term balances the
activation-induced forcing. Since
$\|\mathcal{L}_{P_V}(\Delta)\|_F=\|[D^2,P_V]\|_F$, we use
\begin{equation}
\mathcal K_{\mathrm{NC}}:=\|[D^2,P_V]\|_F^2
\label{eq:appC_K_NC}
\end{equation}
as a scalar diagnostic for the residual drift, leading to the heuristic law
\begin{equation}
\frac{d}{dt}\|P_U-P_V\|_F^2
\approx
-\|\nabla_{P_U}\mathcal{R}\|_F^2+\mathcal K_{\mathrm{NC}}.
\label{eq:appC_final_dynamics}
\end{equation}
Here $\mathcal K_{\mathrm{NC}}$ is precisely the squared off-diagonal mixing of the activation metric between $V$ and $V^\perp$.

\setcounter{equation}{0}
\renewcommand{\theequation}{D.\arabic{equation}}
\section{Proof of Proposition \ref{prop:wd_redundancy} (Weight Decay Favors Head Sharing)}
\label{app:proof_wd_redundancy}

We prove that among all decompositions $W_Q^{(1)} + W_Q^{(2)} = W$
with $W$ fixed, the weight-decay penalty
$\|W_Q^{(1)}\|_F^2 + \|W_Q^{(2)}\|_F^2$ is minimized uniquely at
$W_Q^{(1)} = W_Q^{(2)} = W/2$.

\begin{proof}

For any two matrices $A, B \in \mathbb{R}^{d \times d_k}$, the
parallelogram identity in the Frobenius inner product space states:
\begin{equation}
\|A\|_F^2 + \|B\|_F^2
= \frac{1}{2}\|A + B\|_F^2 + \frac{1}{2}\|A - B\|_F^2.
\label{eq:appD_parallelogram}
\end{equation}
This follows entry-wise from the real identity
$a^2 + b^2 = \frac{1}{2}(a+b)^2 + \frac{1}{2}(a-b)^2$
and linearity of the Frobenius norm squared.

Setting $A = W_Q^{(1)}$ and $B = W_Q^{(2)}$, and using
$W_Q^{(1)} + W_Q^{(2)} = W$:
\begin{equation}
\|W_Q^{(1)}\|_F^2 + \|W_Q^{(2)}\|_F^2
= \frac{1}{2}\|W\|_F^2 + \frac{1}{2}\|W_Q^{(1)} - W_Q^{(2)}\|_F^2.
\label{eq:appD_W_decomposition}
\end{equation}

Since $\|W_Q^{(1)} - W_Q^{(2)}\|_F^2 \ge 0$, we immediately obtain
\begin{equation}
\|W_Q^{(1)}\|_F^2 + \|W_Q^{(2)}\|_F^2 \ge \frac{1}{2}\|W\|_F^2
\label{eq:appD_lower_bound}
\end{equation}
with equality if and only if $\|W_Q^{(1)} - W_Q^{(2)}\|_F^2 = 0$,
i.e., $W_Q^{(1)} = W_Q^{(2)}$.

Under the equality condition $W_Q^{(1)} = W_Q^{(2)}$ and the
constraint $W_Q^{(1)} + W_Q^{(2)} = W$, we get
$2W_Q^{(1)} = W$, hence $W_Q^{(1)} = W_Q^{(2)} = W/2$.

When $W$ has rank $d_k$, both matrices $W/2$ and $W$ have the same
column space $\mathrm{col}(W)$. Therefore, at the minimizer,
both heads share the identical query column space
$\mathrm{col}(W_Q^{(1)}) = \mathrm{col}(W_Q^{(2)}) = \mathrm{col}(W)$.

\textbf{Interpretation.}
The penalty $\|W_Q^{(1)}\|_F^2 + \|W_Q^{(2)}\|_F^2$ decomposes as
\begin{equation}
\underbrace{\frac{1}{2}\|W\|_F^2}_{\text{fixed by constraint}}
+
\underbrace{\frac{1}{2}\|W_Q^{(1)} - W_Q^{(2)}\|_F^2}_{\text{head disagreement}}.
\label{eq:appD_interpretation}
\end{equation}
Weight decay minimizes the total penalty; since the first term is
fixed, it exclusively penalizes head disagreement. The unique minimum
is achieved precisely when both heads agree: $W_Q^{(1)} = W_Q^{(2)}$.
This shows that weight decay, under the idealized constraint of fixed
aggregate query $W$, selects for maximally redundant (identical) heads.
\end{proof}

\paragraph{Remark (toy model scope).}
The proposition fixes the aggregate query matrix $W = W_Q^{(1)} + W_Q^{(2)}$,
which is a strong idealization. In practice, both the aggregate and the
individual heads are jointly optimized, and $W_K$, $W_V$, $W_O$, and
the softmax nonlinearity all interact. The proposition isolates the
weight-decay mechanism in its purest form; whether and to what degree
this mechanism dominates in full multi-head attention is an empirical
question addressed by the diagnostic in Section~\ref{sec:exp_transformer}.

\setcounter{equation}{0}
\setcounter{figure}{0}
\setcounter{table}{0}
\renewcommand{\theequation}{E.\arabic{equation}}
\renewcommand{\thefigure}{E.\arabic{figure}}
\renewcommand{\thetable}{E.\arabic{table}}

\section{Experimental Details}
\label{app:experiments}

\subsection{Part A: Neural Collapse Mechanistic Validation}
\label{app:nc_mechanism}

\subsubsection{Models and Training}

All MLP models are trained on CIFAR-10 with standard cross-entropy loss,
stochastic gradient descent (SGD) with momentum 0.9, weight decay $10^{-4}$, and cosine learning
rate schedule. Inputs are flattened to $32 \times 32 \times 3 = 3072$
dimensions; the final linear classifier maps to 10 classes.

\begin{description}[leftmargin=1.2em,itemsep=0.3em]
\item[\textbf{MLP-linear-50.}]
  50 fully connected layers, no nonlinearity (linear activations),
  Xavier uniform initialization~\cite{glorot2010understanding}.
  5 seeds evaluated.
\item[\textbf{MLP-ReLU-10.}]
  10 fully connected layers with ReLU activations,
  Xavier uniform initialization.
  5 seeds evaluated.
\item[\textbf{MLP-ReLU-50 (orth).}]
  50 fully connected layers with ReLU activations,
  orthogonal initialization (referred to as ``flag-style'' initialization in our implementation).
  We use orthogonal rather than Xavier initialization because
  Xavier on 50-layer ReLU leads to dying neurons
  ($D^2_\text{mean} \approx 0.02$, commutator $\approx 0$),
  which conflates the dying-neuron phenomenon with the commutator obstruction.
  15 seeds evaluated.
\end{description}

\subsubsection{Feature and D² Extraction}

For each model, we extract the penultimate feature representation
$H \in \mathbb{R}^{n \times d}$ (activations before the final linear layer)
and compute the feature subspace basis
$U_\mathrm{feat} \in \mathbb{R}^{d \times r}$
via truncated SVD of $H$ (retaining $r$ components explaining $> 99\%$
of variance).

We estimate $D^2_\mathrm{diag} = \mathbb{E}[\phi'(z)^2]$ by registering
a forward hook on the activation module immediately preceding the final
linear layer.
For ReLU: $\phi'(z) = \mathbf{1}_{z>0}$, so
$D^2_i = \mathbb{E}[\mathbf{1}_{z_i>0}]$ is the mean activation probability
in channel $i$ across the test set.
For the linear model: $D^2 = I$ exactly (no hook required).
All tensors are moved to CPU float32 before commutator computation.

\paragraph{Dimension sanity check.}
Before computing the commutator we assert
$D^2_\mathrm{diag}.\mathrm{shape}[0] = U_\mathrm{feat}.\mathrm{shape}[0]$
to guard against hooks firing on the wrong layer.
We also check orthonormality of $U_\mathrm{feat}$:
$\|U_\mathrm{feat}^\top U_\mathrm{feat} - I_r\|_F < 10^{-4}$.

\subsubsection{Commutator Computation}

Using the identity
$[D^2, P_V]_{ij} = (D^2_{ii} - D^2_{jj})(P_V)_{ij}$
for diagonal $D^2$, we compute

\begin{equation}
\|[D^2, P_V]\|_F^2
= \sum_{i,j} (D^2_{ii} - D^2_{jj})^2 (P_V)_{ij}^2,
\label{eq:appE_commutator_expansion}
\end{equation}

via a broadcasting outer-difference, avoiding explicit construction
of the $d \times d$ commutator matrix.
The relative commutator is normalized as

\begin{equation}
C_\mathrm{rel} = \frac{\|[D^2,P_V]\|_F}{\|A\|_F \cdot \sqrt{r} + \varepsilon},
\quad A = D^2 - cI,
\label{eq:appE_Crel}
\end{equation}

where $c = \mathrm{tr}(D^2)/d$ and $\|P_V\|_F = \sqrt{r}$. We set $\varepsilon = 10^{-8}$ to avoid
division by zero.

For architectures with non-degenerate spectrum, the Level-3 alignment
score is
\[
A_3 = \frac{1}{r}\sum_{i=1}^r
|\langle (U_\mathrm{feat})_i,\,(\tilde{U}_\mathrm{class})_i\rangle|,
\]
where $\tilde{U}_\mathrm{class} = U_\mathrm{class} Q$ and
$U_\mathrm{feat}^\top U_\mathrm{class} = P\Sigma Q^\top$ is the SVD
used in the basis-matching procedure.
Table~\ref{tab:app_vit_l23} reports the excess $A_3-\mu_0$ over
the Haar-random baseline mean.
When the feature spectrum has a relative gap below $0.1$, Level-3 is
reported as ill-defined.

\subsubsection{Full Results}

\begin{table}[H]
\centering
\caption{Full mechanistic results across all seeds.
$c$: conformality scalar $\mathrm{tr}(D^2)/d$.
Rel.\ gap: $\|D^2-cI\|_F / (c\sqrt{d})$.
$C_\mathrm{rel}$: relative commutator.}
\label{tab:app_commutator_full}
\vspace{4pt}
\resizebox{\textwidth}{!}{\begin{tabular}{lccccl}
\toprule
Model & $c$ & $\|D^2-cI\|_F$ & Rel.\ gap
      & $\|[D^2,P_V]\|_F$ & $C_\mathrm{rel}$ \\
\midrule
MLP-linear-50 (5 seeds)
  & $1.000 \pm 0.000$
  & $0.000 \pm 0.000$
  & $0.000 \pm 0.000$
  & $0.000 \pm 0.000$
  & $0.000 \pm 0.000$ \\
MLP-ReLU-10 Xavier (5 seeds)
  & $0.648 \pm 0.071$
  & \phantom{0}---
  & \phantom{0}---
  & $1.219 \pm 0.334$
  & --- \\
MLP-ReLU-50 orth (15 seeds)
  & $0.425 \pm 0.038$
  & \phantom{0}---
  & \phantom{0}---
  & $1.461 \pm 0.075$
  & --- \\
\bottomrule
\end{tabular}}
\end{table}

\noindent
\textbf{Note on MLP-ReLU-50 Xavier (excluded).}
We also evaluated the 50-layer ReLU MLP with Xavier initialization
(15 seeds). Two seeds produced $D^2_\mathrm{mean} \approx 0$
and commutator $\approx 0$ due to dying neurons; the remaining seeds
had $D^2_\mathrm{mean} \in [0.008, 0.050]$, far below the orthogonal-init
model. These results reflect a training instability rather than the
commutator obstruction and are excluded from the main comparison.
This is consistent with the known failure of Xavier initialization
for very deep ReLU networks~\cite{he2015delving}.

\paragraph{Per-seed detail: MLP-ReLU-50 (orth).}

\begin{table}[H]
\centering
\small
\begin{tabular}{rcc}
\toprule
Seed & $\|[D^2,P_V]\|_F$ & $D^2_\mathrm{mean}$ \\
\midrule
0  & 1.576 & 0.372 \\
1  & 1.489 & 0.447 \\
2  & 1.385 & 0.445 \\
3  & 1.408 & 0.408 \\
4  & 1.468 & 0.427 \\
5  & 1.460 & 0.450 \\
6  & 1.431 & 0.401 \\
7  & 1.455 & 0.390 \\
8  & 1.467 & 0.444 \\
9  & 1.475 & 0.419 \\
10 & 1.573 & 0.464 \\
11 & 1.298 & 0.360 \\
12 & 1.382 & 0.496 \\
13 & 1.524 & 0.445 \\
14 & 1.530 & 0.399 \\
\midrule
Mean $\pm$ std & $1.461 \pm 0.075$ & $0.425 \pm 0.038$ \\
\bottomrule
\end{tabular}
\caption{Per-seed results for MLP-ReLU-50 (orthogonal init, 15 seeds).}
\label{tab:app_relu50_seeds}
\end{table}

\paragraph{Evaluation metrics.}
The \emph{Level-2 z-score} measures how many standard deviations 
the observed Level-2 alignment exceeds the null distribution under 
random subspace assignment. Specifically, $z = (\hat\mu - \mu_0)/\sigma_0$, 
where $\hat\mu$ is the observed alignment, and $\mu_0$, $\sigma_0$ 
are the mean and standard deviation under the Haar-random baseline.
A run is classified as \texttt{gc\_acceptable} if the geometric 
consistency gap satisfies $\|D^2 - cI\|_F / (c\sqrt{d}) < \tau$, 
with threshold $\tau = 0.5$.

\subsubsection{Level-2 Alignment and Geometric Consistency: CNN Architectures}

\begin{table}[H]
\centering
\caption{Level-2 z-scores for CNN architectures (mean $\pm$ std across seeds).
Level-3 is ill-defined (spectral collapse) for all CNN models.
All seeds satisfy \texttt{gc\_acceptable} for ResNet-18 and ResNet-50.}
\label{tab:app_cnn_l2}
\vspace{4pt}
\resizebox{\textwidth}{!}{\begin{tabular}{llccc}
\toprule
Model & Activation & Level-2 $z$-score & Level-3 status & gc\_acceptable \\
\midrule
ResNet-18 & ReLU      & $335.7 \pm 0.3$ & Degenerate & $20/20$ \\
          & GELU      & $303.5 \pm 0.6$ & Degenerate & $20/20$ \\
          & LeakyReLU & $323.1 \pm 0.4$ & Degenerate & $20/20$ \\
          & Mish      & $301.6 \pm 0.5$ & Degenerate & $20/20$ \\
          & SiLU      & $316.7 \pm 0.7$ & Degenerate & $20/20$ \\
          & Softplus  & $320.8 \pm 0.5$ & Degenerate & $20/20$ \\
\midrule
ResNet-50 & ReLU      & $1293.8 \pm 14.5$ & Degenerate & $10/10$ \\
          & GELU      & $1341.3 \pm 6.0$  & Degenerate & $10/10$ \\
          & LeakyReLU & $1194.5 \pm 5.8$  & Degenerate & $10/10$ \\
          & Mish      & $1247.8 \pm 3.2$  & Degenerate & $8/8$   \\
          & SiLU      & $1224.7 \pm 4.4$  & Degenerate & $10/10$ \\
          & Softplus  & $1299.5 \pm 8.6$  & Degenerate & $9/9$   \\
\midrule
ConvNeXt-Tiny~\cite{liu2022convnet} & ReLU  & $429.8 \pm 10.5$ & Degenerate & $24/24$ \\
              & GELU  & $447.6 \pm 9.7$  & Degenerate & $24/24$ \\
              & SiLU  & $446.4 \pm 7.9$  & Degenerate & $24/24$ \\
\midrule
MLP-Mixer-B16~\cite{tolstikhin2021mlpmixer} & ReLU & $452.5 \pm 53.6$ & Degenerate & $9/24$  \\
              & GELU & $450.3 \pm 26.5$ & Degenerate & $12/24$ \\
              & SiLU & $395.0 \pm 36.3$ & Degenerate & $16/24$ \\
\bottomrule
\end{tabular}}
\end{table}
\begin{figure}[H]
\centering
\includegraphics[width=\textwidth]{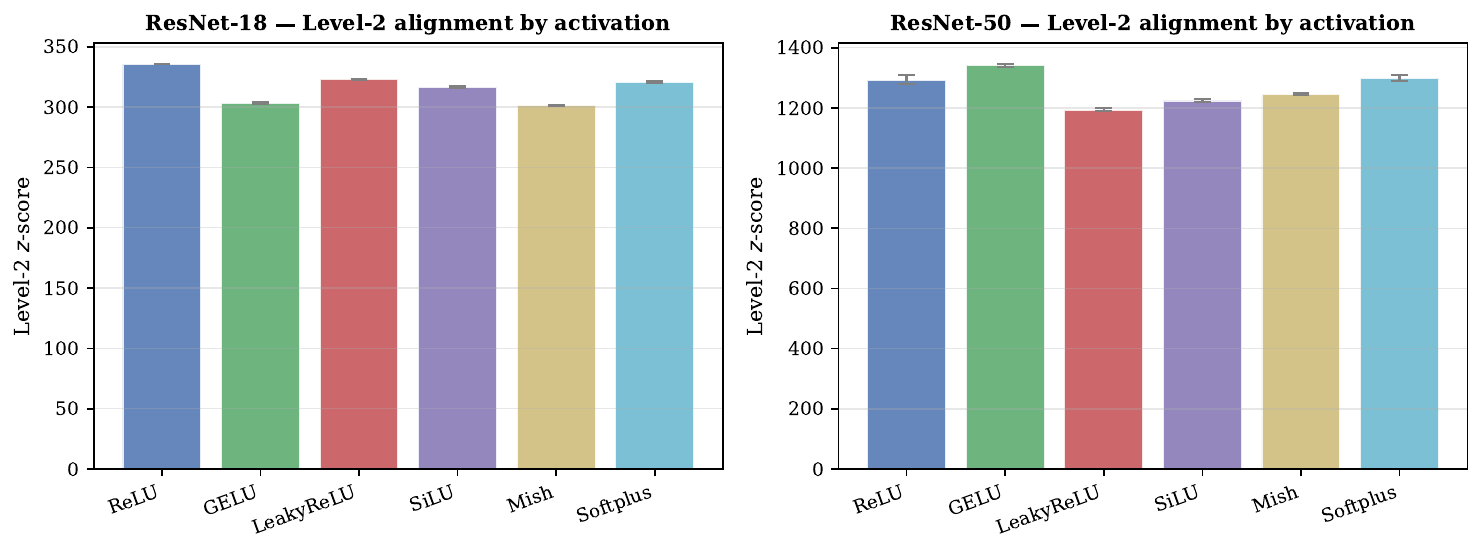}
\caption{Level-2 alignment z-scores for ResNet-18 and ResNet-50
across activation functions (mean $\pm$ std). All activations show
high Level-2 z-scores, consistent with Theorem~\ref{thm:l2_attraction}.}
\label{fig:app_resnet_l2}
\end{figure}
\noindent\textbf{Note on MLP-Mixer geometric consistency.}
MLP-Mixer shows \texttt{gc\_acceptable} in approximately 50\% of seeds,
indicating that Assumption~\ref{ass:geometric_consistency} is only
partially satisfied. This is consistent with the token-mixing architecture
of MLP-Mixer, where spatial mixing interacts with channel-wise activations
in a non-standard way. MLP-Mixer results are therefore reported here for
completeness but are not used to support the main theoretical claims.
\begin{figure}[H]
\centering
\includegraphics[width=\textwidth]{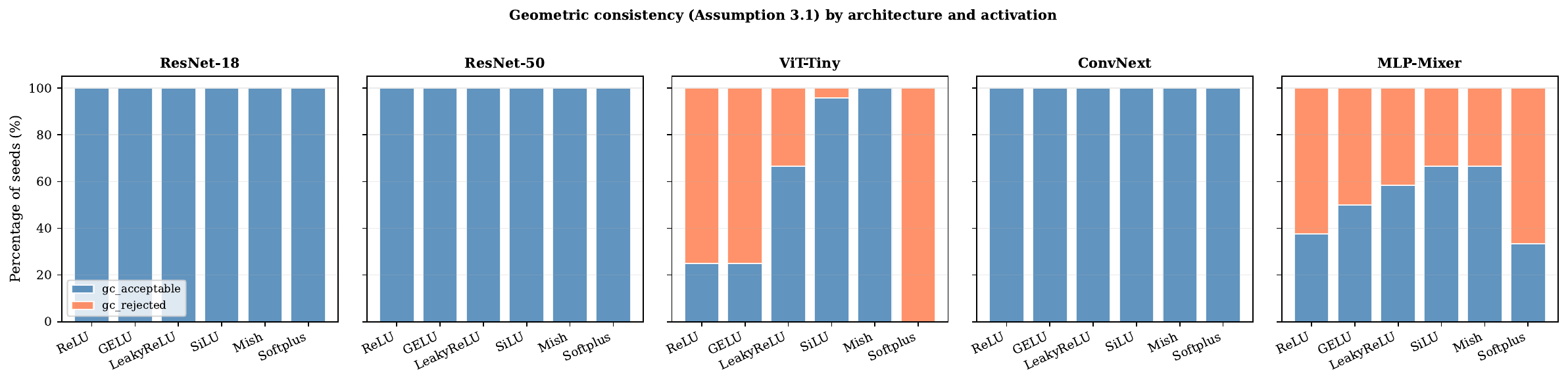}
\caption{Geometric consistency (\texttt{gc\_acceptable} rate) by
architecture and activation. ResNet and ConvNext satisfy
Assumption~\ref{ass:geometric_consistency} uniformly; ViT-Tiny and
MLP-Mixer show activation-dependent violations.}
\label{fig:app_gc}
\end{figure}

\subsubsection{ViT-Tiny: Level-2 and Level-3 Alignment}

\begin{table}[H]
\centering
\caption{ViT-Tiny Level-2 z-score and Level-3 mean alignment across
activations (mean $\pm$ std, 8 seeds each). Level-3 is well-defined
but near zero, consistent with the commutator obstruction.
gc\_acceptable rate reflects whether Assumption~\ref{ass:geometric_consistency}
holds at the penultimate activation layer.}
\label{tab:app_vit_l23}
\vspace{4pt}
\begin{tabular}{lcccl}
\toprule
Activation & Level-2 $z$ & Level-3 alignment & gc\_acceptable & Note \\
\midrule
ReLU      & $43.6 \pm 3.9$   & $-0.017 \pm 0.085$ & $6/24$  & gc mostly rejected \\
GELU      & $59.5 \pm 12.6$  & $-0.059 \pm 0.090$ & $6/24$  & gc mostly rejected \\
SiLU      & $80.7 \pm 5.0$   & $-0.006 \pm 0.085$ & $23/24$ & gc acceptable \\
LeakyReLU & $82.7 \pm 7.4$   & $+0.011 \pm 0.074$ & $16/24$ & gc partial \\
Mish      & $83.8 \pm 4.2$   & $+0.068 \pm 0.067$ & $24/24$ & gc acceptable \\
Softplus  & $80.2 \pm 6.1$   & $+0.050 \pm 0.114$ & $0/24$  & gc all rejected \\
\bottomrule
\end{tabular}
\end{table}
\begin{figure}[H]
\centering
\includegraphics[width=\textwidth]{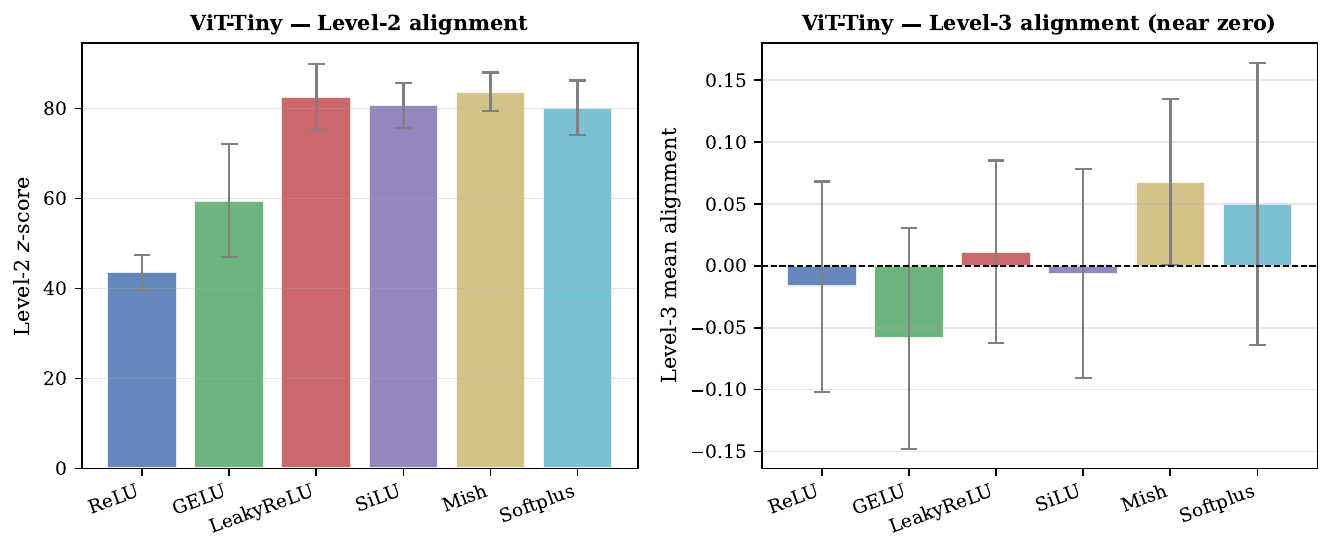}
\caption{ViT-Tiny Level-2 z-score (left) and Level-3 mean alignment
(right) across activations. Level-3 alignment remains near zero
regardless of activation, consistent with the commutator obstruction.}
\label{fig:app_vit}
\end{figure}
\noindent Level-3 alignment is well-defined for ViT-Tiny (non-degenerate spectrum)
but the mean alignment score is near zero regardless of activation.
The activation-dependence of the gc criterion reflects that ViT's
self-attention layers introduce a different activation metric structure
than standard feedforward networks; activations with near-isotropic
Jacobians (Mish, SiLU) satisfy Assumption~\ref{ass:geometric_consistency}
more reliably.

\subsection{Part B: Transformer Head Subspace Diagnostic}
\label{app:transformer_diagnostic}

\subsubsection{Models}

\begin{description}[leftmargin=1.2em,itemsep=0.3em]
\item[\textbf{GPT-2 small (pretrained).}]
  $d=768$, $h=12$, $d_k=64$, $L=12$.
  Loaded from HuggingFace Transformers~\cite{wolf2020transformers} (\texttt{gpt2}).
  Analytic baseline: $d_k/d = 0.0833$.
\item[\textbf{GPT-2 small (random init).}]
  Identical architecture, randomly initialized weights.
  Used as a sanity check to verify the null baseline.
\item[\textbf{Llama-3.2-1B (pretrained).}]
  $d=2048$, $h=32$, $d_k=64$, $L=16$.
  Loaded from HuggingFace (\texttt{meta-llama/Llama-3.2-1B})
  in \texttt{bfloat16}; per-layer conversion to \texttt{float32}
  for QR decomposition.
  Analytic baseline: $d_k/d = 0.0313$.
\item[\textbf{DistilGPT-2 (pretrained).}]
  $d=768$, $h=12$, $d_k=64$, $L=6$.
  Same architecture as GPT-2 small, fewer layers.
  Mean $\bar R = 0.185$ ($6/6$ layers above $2\sigma$).
  Included as supplementary evidence.
\item[\textbf{GPT-2 medium (pretrained).}]
  $d=1024$, $h=16$, $d_k=64$, $L=24$.
  Mean $\bar R = 0.147$ ($24/24$ layers above $2\sigma$ vs same-architecture
  random baseline).
  Note: $\bar R$ is lower than GPT-2 small, possibly due to greater head
  specialization at larger scale; we do not claim a size-monotone relationship.
\end{description}

\subsubsection{Metric and Extraction}

For each layer $\ell$, we extract the query projection matrix
$W_Q^{(\ell,i)} \in \mathbb{R}^{d \times d_k}$ for each head $i$
and compute its orthonormal column basis $U_i^{(\ell)}$ via
QR decomposition (\texttt{torch.linalg.qr}, reduced mode).
QR is preferred over SVD because we require only the column space,
and QR is substantially faster on large $d$ (e.g., Llama).

\paragraph{GPT-2 weight extraction.}
GPT-2 uses a combined \texttt{Conv1D} projection with weight stored
as $(d, 3d)$ rather than the transposed $(3d, d)$ convention of
\texttt{nn.Linear}.
Query weights occupy the first $d$ columns:
$W_Q = \texttt{c\_attn.weight}[:, :d] \in \mathbb{R}^{d \times d}$.

\paragraph{Llama weight extraction.}
Llama uses separate \texttt{q\_proj}, \texttt{k\_proj}, \texttt{v\_proj}.
$W_Q = \texttt{q\_proj.weight}^\top \in \mathbb{R}^{d \times h d_k}$.
We analyze query projections only; for grouped-query architectures
this uses \texttt{num\_attention\_heads} (query heads),
not \texttt{num\_key\_value\_heads}.

\paragraph{$d_k$ verification.}
We read $d_k = q_\mathrm{out} / h$ from the actual weight shape
$\texttt{q\_proj.weight.shape} = (h \cdot d_k,\, d)$
rather than assuming $d_k = d/h$, to handle non-standard configurations.

\subsubsection{Random Baseline}

The Haar-random baseline is computed empirically by drawing
$n_\mathrm{trials}$ independent sets of $h$ random orthonormal
$d_k$-frames in $\mathbb{R}^d$ (via QR of random Gaussian matrices),
computing $R^{(\ell)}$ for each, and reporting
mean $\pm$ std across trials and layers.
This agrees with the analytic expectation
$\mathbb{E}[R^{(\ell)}] = d_k/d$ to four decimal places
($n_\mathrm{trials} = 100$ for GPT-2, $50$ for Llama).

\paragraph{Unordered-pair equivalence.}
The formula $\frac{1}{h(h-1)}\sum_{i \neq j}$ averages over
ordered pairs. In code we average over unordered pairs $i < j$
(dividing by $h(h-1)/2$); the two formulas are equivalent.

\subsubsection{Full Layerwise Results}

\begin{table}[H]
\centering
\small
\caption{Layerwise $R^{(\ell)}$ for GPT-2 small and Llama-3.2-1B.
Random baseline: $0.0834 \pm 0.0002$ (GPT-2) and $0.0313 \pm 0.0000$ (Llama).}
\label{tab:app_transformer_full}
\begin{tabular}{rcc}
\toprule
Layer $\ell$ & GPT-2 small & Llama-3.2-1B \\
\midrule
0  & 0.1492 & 0.1045 \\
1  & 0.1740 & 0.0740 \\
2  & 0.1732 & 0.0694 \\
3  & 0.1780 & 0.0825 \\
4  & 0.1767 & 0.0809 \\
5  & 0.1778 & 0.0970 \\
6  & 0.1851 & 0.1003 \\
7  & 0.1882 & 0.1117 \\
8  & 0.1925 & 0.1263 \\
9  & 0.1938 & 0.1018 \\
10 & 0.2350 & 0.0958 \\
11 & 0.2303 & 0.0907 \\
12 & ---    & 0.0801 \\
13 & ---    & 0.0867 \\
14 & ---    & 0.1732 \\
15 & ---    & 0.1304 \\
\midrule
Mean & $0.1878$ & $0.1003$ \\
Layers $> 2\sigma$ & $12/12$ & $16/16$ \\
\bottomrule
\end{tabular}
\end{table}

\begin{table}[H]
\centering
\small
\caption{Multi-model summary (same-architecture random baseline
for GPT-family; Haar baseline for Llama).}
\label{tab:app_multimodel}
\begin{tabular}{lcccc}
\toprule
Model & $L$ & $d_k/d$ & $\bar R$ & Layers $> 2\sigma$ \\
\midrule
GPT-2 small (random init) & 12 & 0.083 & 0.083 & $0/12$ \\
DistilGPT-2               &  6 & 0.083 & 0.185 & $6/6$  \\
GPT-2 small               & 12 & 0.083 & 0.188 & $12/12$ \\
GPT-2 medium              & 24 & 0.063 & 0.147 & $24/24$ \\
Llama-3.2-1B              & 16 & 0.031 & 0.100 & $16/16$ \\
\bottomrule
\end{tabular}
\end{table}

\paragraph{GPT-2 medium note.}
GPT-2 medium has a lower mean $\bar R = 0.147$ than GPT-2 small
($\bar R = 0.188$), despite being larger.
This may reflect greater head specialization at scale,
but we do not claim a size-monotone relationship and do not
pursue this further here.

\subsubsection{Interpretation and Limitations}

The consistent excess of $R^{(\ell)}$ over the random baseline
across architectures (GPT-2 family and Llama) and training domains
(language modeling) supports the geometric-consistency principle
as a broadly applicable diagnostic.

We caution against over-interpreting these results:
\begin{enumerate}[leftmargin=1.5em,itemsep=0.2em]
\item These are \emph{terminal-state} observations only.
  We do not have training checkpoints and cannot verify
  whether $R^{(\ell)}$ increases monotonically during training.
\item The Haar-random baseline controls for subspace dimension
  but not for the specific structure of transformer training objectives.
  A random model trained on a simpler task might also show excess overlap.
\item $R^{(\ell)}$ aggregates pairwise overlaps and cannot distinguish
  a configuration where all heads partially overlap from one where
  a subset of heads are near-identical.
\end{enumerate}

\end{document}